\declaretheorem[name=Theorem,refname={theorem,theorems},Refname={Theorem,Theorems},numberwithin=section]{theorem}
\declaretheorem[name=Corollary,refname={corollary,corollaries},Refname={Corollary,Corollaries},sibling=theorem]{corollary}
\newcommand{\todoc}[2][]{\todo[color=Apricot!30,#1]{#2}}
\newcommand{\todoar}[2][]{\todo[color=Blue!30,#1]{#2}}
\def\argmax{\mathop{\rm arg\, max}}
\def\argmin{\mathop{\rm arg\, min}}
\DeclareMathOperator{\tr}{tr}
\def\hg{\hat{g}}
\def\t{\theta}
\def\tit{\tilde{\theta}}
\def\D{\Delta}
\def\F{{\mathcal F}}
\def\H{{\mathcal H}}
\def\L{{\mathcal L}}
\def\S{{\mathcal S}}
\def\X{{\mathcal X}}
\def\d{\delta}
\def\R{\mathbb{R}}
\def\real{\mathbb{R}}
\def\I{\mathcal{I}}
\newcommand{\EEp}[1]{\mathbb{E}\left[#1\right]}
\newcommand{\Prob}[1]{\mathbb{P}\left(#1\right)}
\newcommand{\EEpcond}[2]{\mathbb{E}\left[\left.#1\right|#2\right]}
\def\tr{{\mathop{\rm tr}}}
\def\transpose{{^\top}}
\def\grad{{\nabla}}
\def\d{{\,\,\mbox{d}}}
\def\loss{{L}}
\newcommand{\ip}[1]{\left\langle #1 \right\rangle}
\def\beqa{\begin{eqnarray}}
\def\eeqa{\end{eqnarray}}
\def\beqann{\begin{eqnarray*}}
\def\eeqann{\end{eqnarray*}}
\newcommand{\W}{{\cal W}}
\newcommand{\ra}{\to}
\newcommand{\eqdef}{\doteq} 
\newcommand{\FF}{\mathcal{F}}
\DeclareMathOperator{\Pen}{\text{Pen}}
\newcommand{\cset}[2]{\left\{ #1\,:\,#2 \right\}}
\DeclareMathOperator*{\minimize}{\text{minimize}}
\newcommand{\st}{\text{s.t. }}
\newcommand{\GradSampler}{\text{\tt GradSampler}}
\newcommand{\one}[1]{\mathbb{I}_{\{#1\}}}
\newcommand{\norm}[1]{\left\|#1\right\|}
\newcommand{\kernel}{\kappa}
\newcommand{\Kernel}{{\mathcal K}}
\newcommand{\hd}{\hat{d}}
\newcommand{\MI}{{R_{1:d}}}
\newcommand{\mi}{{r_{1:d}}}
\newcommand{\Stoch}{{\sc Stoch}\xspace}
\newcommand{\Kloft}{{\sc Kloft}\xspace}
\newcommand{\Bach}{{\sc Bach}\xspace}
\newcommand{\Cortes}{{\sc Cortes}\xspace}
\newcommand{\UCD}{{\sc UCD}\xspace}
\newcommand{\Uniform}{{\sc Uniform}\xspace}
\title{A Randomized Mirror Descent Algorithm \\for Large Scale Multiple Kernel Learning}
\date{}
\author{
\\ \\Arash Afkanpour \hspace{30mm} Andr\'{a}s Gy\"{o}rgy \\ \\  
\hspace{2mm}Csaba Szepesv\'{a}ri \hspace{30mm} Michael Bowling	\\ \\
\\Department of Computing Science\\
University of Alberta\\
Edmonton, AB\\
\{\texttt{afkanpou,gyorgy,szepesva,mbowling}\}\texttt{@ualberta.ca} \\
}
\begin{document} 

\maketitle


\begin{abstract} 
We consider the problem of simultaneously learning to linearly combine a very large number of kernels and learn a good predictor based on the learnt kernel.
When the number of kernels $d$ to be combined is very large, multiple kernel learning methods whose computational cost scales linearly in $d$ are intractable.
We propose a randomized version of the mirror descent algorithm to overcome this issue,
 under the objective of minimizing the group $p$-norm penalized empirical risk.
The key to achieve the required exponential speed-up 
 is the computationally efficient construction of low-variance estimates of the gradient.
We propose importance sampling based estimates, and
find that the ideal distribution samples a coordinate with a probability proportional to the magnitude of the corresponding gradient.
We show the surprising result that in the case of learning the coefficients of a polynomial kernel,
 the combinatorial structure of the base kernels to be combined allows the implementation of
 sampling from this distribution to run in $O(\log(d))$ time, making 
 the total computational cost of the method  
 to achieve an $\epsilon$-optimal solution
 to be $O(\log(d)/\epsilon^2)$,
 thereby allowing our method to operate for very large values of $d$.
Experiments with simulated and real data confirm that the new algorithm is computationally more efficient than its state-of-the-art alternatives.
\end{abstract}

\section{Introduction}
We look into the computational challenge of finding a good predictor in a multiple kernel learning (MKL) setting where the number of kernels is very large. In particular, we are interested in cases where the base kernels come from a space with combinatorial structure and thus their number $d$ could be exponentially large. 
Just like some previous works
\citep[e.g.][]{rakotomamonjy2008simplemkl,XuJiKiLyu08,NaDiRa09}
we start with the approach that views the MKL problem as a nested, large scale convex optimization problem, where the first layer optimizes the weights of the kernels to be combined. More specifically, as the objective we minimize the group $p$-norm penalized empirical risk.
However, as opposed to these works whose underlying iterative methods have a complexity of  $\Omega(d)$ for just any one iteration, following \citep{Nes10, Nes12, ShTe11,RichTa11} we use a randomized coordinate descent method, which was effectively used in these works to decrease the per iteration complexity to $O(1)$. 
The role of randomization in our method is to use it to build an unbiased estimate of the gradient at the most recent iteration.
The issue  then is how the variance (and so the number of iterations required) scales with $d$.
As opposed to the above mentioned works, in this paper we propose to make the distribution over the updated coordinate \emph{dependent on the history}. 
We will argue that sampling from a distribution that is proportional to the magnitude of the gradient vector is desirable to keep the variance (actually, second moment) low and in fact \todoar{does this long sentence have to be italic?}\emph{we will show that there are interesting cases of MKL (in particular, the case of combining kernels coming from a polynomial family of kernels) when efficient sampling (i.e., sampling at a cost of $O(\log d)$) is feasible from this distribution}. Then,  the variance is  controlled by the a priori weights put on the kernels, making it potentially independent of $d$. \todoar{does this long sentence have to be italic?}\emph{Under these favorable conditions (and in particular, for the polynomial kernel set with some specific prior weights), the complexity of the method as a function of $d$ becomes logarithmic, which makes our MKL algorithm feasible even for large scale problems.}
This is to be contrasted to  the approach of \citet{Nes10,Nes12} where a fixed distribution is used and where the {\em a priori} bounds on the method's convergence rate, and, hence, its computational cost to achieve a prescribed precision, will depend linearly on $d$ (note that we are comparing upper bounds here, so the actual complexity could be smaller).
Our algorithm is based on the mirror descent (or mirror descent) algorithm (similar to the work of \citet{RichTa11} who uses uniform distributions).
\if0
Combining a large number of features to obtain efficient prediction methods has recently received considerable attention. The challenge in this task is twofold: algorithms with good prediction performance and low computational complexity are of interest, but the many features can easily lead to overfitting, and considering all features at once may be computationally infeasible.
To avoid overfitting, in most cases only linear predictors based on features are considered in the literature, and these predictors are combined using penalized empirical risk minimization. 

The solution of this risk minimization problem is known to have favorable generalization properties under various conditions \citep[e.g.,][]{HaTiFri09,koltchinskii2011oracle,BuvdG11}.
Here the exact form of the penalty plays an important role, and sparsity-inducing penalty functions are of great importance when the number of features is very large. In this paper, following the approach of  \citet{kloft2011lp},  we consider the special case when the penalty is a so-called group $p$-norm penalty with $1\le p \le 2$. The rationale of using such norms is that for the selected range of parameter $p$ they are expected to encourage sparsity on the group level. The actual form is also chosen for reasons of computational convenience. In fact, the reason to use the $2$-norm of the weights is to allow the algorithm to work even with infinite-dimensional feature vectors (and thus weights) by resorting to the kernel trick. With the introduction of kernels, the learning problem can be seen to be an instance of {\em multiple kernel learning} \citep{kloft2011lp}. (For an exhaustive survey of MKL, see \citet{gonen2011multiple} and the references therein.)

The reason that multiple kernel learning is (rightly) popular is because often it is hard to decide {\em a priori} which of a number of features-maps (or kernels) is the most appropriate for a given task. In fact, in many learning problems one has very little {\em a priori} information that helps this choice, in which case the user is best off if the MKL algorithm can be fed with a huge number of kernels. With proper regularization, overfitting should be possible to avoid, while the increased approximation power may bring down the ultimate error. 
In fact, in practice, one often wishes to combine a combinatorially large or even infinitely many predictors.
With a large number of kernels, the computational aspects of the learning problem soon becomes the bottleneck: Solution methods whose computational complexity scales linearly with the number of kernels to combine might suddenly become unfeasible in the regime when the number of kernels is very large.
It is this computational problem that is the subject of the present paper.
\fi

It is important to mention that there are algorithms designed to handle the case of infinitely many kernels, for example, the algorithms by \citet{argyriou2005learning, argyriou2006dc, gehler2008infinite}. However, these methods lack convergence rate guarantees, and, for example, the consistency for the method of \citet{gehler2008infinite} works only for ``small'' $d$.
The algorithm of \citet{bach2008exploring}, though practically very efficient, suffers from the same deficiency. 
	\todoc{Double check!}
A very interesting proposal by \citet{cortes2009learning} considers learning to combine a large number of kernels and comes with guarantees, though their algorithm restricts the family of kernels in a specific way.

The rest of the paper is organized as follows. The problem is defined formally in Section~\ref{sec:prelim}. Our new algorithm is presented and analyzed in Section~\ref{sec:main}, while its specialized version for learning polynomial kernels is given in Section~\ref{sec:examples}.
Finally, experiments are provided in Section~\ref{sec:experiments}.

\section{Preliminaries} 
\label{sec:prelim}

In this section we give the formal definition of our problem.
Let $\I$ denote a finite index set, indexing the predictors (features) to be combined, and
define the set of predictors considered over the input space $\X$ as
$ 
\FF = \cset{ f_w: \X \ra \real }{ f_w(x) = \sum_{i\in \I} \ip{ w_i, \phi_i(x) }, \quad x\in \X }
$. 
Here
 $\W_i$ is a Hilbert space over the reals, 
 $\phi_i:\X \ra \W_i$ is a feature-map,
 $\ip{x,y}$ is the inner product over the Hilbert space that $x,y$ belong to
 and $w = (w_i)_{i\in \I}\in \W \eqdef \times_{i\in \I} \W_i$ (as an example, $\W_i$ may just be a finite dimensional Euclidean space).
The problem we consider is to solve the optimization problem
\begin{equation}
\label{eq:penrisk}
\minimize \,\, \loss_n(f_w) + \Pen(f_w) \quad \text{subject to } w\in \W\,,
\end{equation}
where  $\Pen(f_w)$ is a penalty that will be specified later, and 
$
\loss_n(f_w)  =\frac1n \sum_{t=1}^n \ell_t( f_w(x_t) )
$
is the empirical risk of predictor $f_w$, defined in terms of the convex losses
 $\ell_t: \real\ra \real$  ($1\le t\le n$) and inputs $x_t\in \X$ ($1\le t\le n$). The solution $w^*$ of the above penalized empirical risk minimization problem is known to have favorable generalization properties under various conditions, see, e.g., \citet{HaTiFri09}.
In supervised learning problems $\ell_t(y) = \ell(y_t,y)$ for some loss function $\ell:\R\times \R \ra \R$, such as 
 the squared-loss, $\ell(y_t,y) =\frac12 (y-y_t)^2$, or the hinge-loss, $\ell_t(y_t,y) = \max(1-y y_t,0)$, where in the former case $y_t\in \real$, while in the latter case $y_t\in \{-1,+1\}$.
We note in passing that  for the sake of simplicity, we shall sometimes abuse notation and write $\loss_n(w)$ for $\loss_n(f_w)$ and even drop the index $n$ when the sample-size is unimportant. \todoc{I decided to write this in terms of weights because $\Pen(f)$ might not be well-defined in our case below. Mention unsupervised learning later.}

As mentioned above, in this paper we consider the special case in~\eqref{eq:penrisk} when the penalty is a so-called group $p$-norm penalty with $1\le p \le 2$, a case considered earlier, e.g., by \citet{kloft2011lp}. Thus our goal is to solve 
\begin{equation}
\label{eq:min1}
\minimize_{w\in \W} \,\, \loss_n(w) + \frac{1}{2} \left(\sum_{i\in \I} \rho_i^p \| w_i\|_2^p\right)^{\frac{2}{p}}\,,
\end{equation}
where the scaling factors $\rho_i>0, i\in\I$, are assumed to be given. 
We introduce the notation $u = (u_i)\in \R^\I$ to denote the column vector obtained from the values $u_i$. \todoc{Why here?} \todoar{This sentence must be rewritten.}

The rationale of using the squared weighted $p$-norm is that for $1\le p < 2$ it is expected to encourage sparsity at the group level which should allow one to handle cases when $\I$ is very large (and the case $p=2$ comes for free from the same analysis). \todoc{Ref?}
The actual form, however, is also chosen for reasons of computational convenience. 
In fact, the reason to use the $2$-norm of the weights is to allow the algorithm to work even with infinite-dimensional feature vectors (and thus weights) by resorting to the kernel trick. To see how this works, just notice that the  penalty in~\eqref{eq:min1} can also be written 
as
\[
\left(\sum_{i\in \I} \rho_i^p \| w_i\|_2^p\right)^{\frac{2}{p}}
=
\inf \left\{ \sum_{i\in\I} \frac{\rho_i^2\|w_i\|_2^2}{\t_i} \,:\, \t\in\Delta_{\frac{p}{2-p}} \right\}\,,
\]
where for $\nu\ge 1$,
$ 
\D_\nu=\{\t\in [0,1]^{|\I|}: \|\t\|_\nu \le 1\}
$ 
is the positive quadrant of the $|\I|$-dimensional $\ell^\nu$-ball \citep[see, e.g.,][Lemma 26]{MiPo05}.
Hence, defining
\[
J(w,\t)=L(w)+ \frac{1}{2} \sum_{i\in\I} \frac{\rho_i^2\|w_i\|_2^2}{\t_i}
\]
for any $w\in\W, \t \in [0,1]^{|\I|}$,
an equivalent form of~\eqref{eq:min1} is 
\begin{equation}
\label{eq:jmin}
\minimize_{w\in \W,\t\in \Delta_{\nu}} \,\, J(w,\t) \\ 
\end{equation}
where 
$\nu = p/(2-p)\in [1,\infty)$ and
we define $0/0=0$ and $u/0=\infty$ for $u>0$, which implies that $w_i=0$ if $\t_i=0$. 
That this minimization problem is indeed equivalent to our original task \eqref{eq:min1} for the chosen value of $\nu$ follows 
from the fact that $J(w,\t)$ is jointly convex in $(w,\t)$.\footnote{Here and in what follows by equivalence we mean that the set of optimums in terms of $w$ (the primary optimization variable) is the same in the two problems.}

Let $\kernel_i:\X\times \X \to\real$ be the reproducing kernel underlying $\phi_i$: $\kernel_i(x,x') = \ip{ \phi_i(x),\phi_i(x') }$ ($x,x'\in \X$) and let $\H_i = H_{\kernel_i}$ the corresponding reproducing kernel Hilbert space (RKHS). 
Then, for any given fixed value of $\t$, the above problem becomes an instance of a standard penalized learning problem in the RKHS $\H_\t$ underlying the kernel $\kernel_\t =  \sum_{i\in\I}  \theta_i \rho_i^{-2} \kernel_i$.
In particular, by the theorem on page 353 in \citet{Aro50}, the problem of finding $w\in \W$ for fixed $\t$ can be seen to be equivalent to \todoc{For finite $\I$..}
$ 
\minimize_{f\in \H_\t} \,\, L(f) + \frac{1}{2} \|f\|_{\H_\t}^2,
$ 
and thus~\eqref{eq:min1} is seen to be equivalent to 
$ 
\minimize_{f\in \H_\t,\t \in \Delta_{\nu}} \,\, L(f) + \frac{1}{2} \|f\|_{\H_\t}^2\,.
$ 
Thus, we see that the method can be thought of as finding the weights of a kernel $\kernel_\t$ and a predictor minimizing the $\H_\t$-norm penalized empirical risk.
This shows that our problem is an instance of multiple kernel learning
 (for an exhaustive survey of MKL, see, e.g., \citealp{gonen2011multiple} and the references therein). 

\section{The new approach}
\label{sec:main}

When $\I$ is small, or moderate in size, the joint-convexity of $J$ allows one to use off-the-shelf solvers to find the joint minimum of $J$. However, when $\I$ is large, off-the-shelf solvers might be slow or they may run out of memory.
Targeting this situation 
 we propose the following approach: Exploiting again that $J(w,\t)$ is jointly convex in $(w,\t)$,  find the optimal weights by
finding the minimizer of 
\[
J(\t) \eqdef \inf_{w} J(w,\t),
\]
or, alternatively, $J(\t) = J(w^*(\t),\t)$, where $w^*(\t) \doteq \arg\min_w J(w,\t)$ (here we have slightly abused notation by reusing the symbol $J$). Note that $J(\t)$ is convex by the joint convexity of $J(w,\t)$.
Also, note that $w^*(\t)$ exists and is well-defined as the minimizer of $J(\cdot,\t)$ is unique for any $\t\in \Delta_{\nu}$ (see also Proposition~\ref{prop:wstar} below).
Again, exploiting the joint convexity of $J(w,\t)$, we find that if $\t^*$ is the minimizer of $J(\t)$, then $w^*(\t^*)$ will be an optimal solution to the original problem~\eqref{eq:min1}. 
To optimize $J(\t)$ we propose to use stochastic gradient descent with artificially injected randomness to avoid the need to fully evaluate the gradient of $J$.
More precisely, our proposed algorithm is an instance of a randomized version of the {\em mirror descent algorithm} \citep{Roc76,Mar78,NY83},
where in each time step only one coordinate of the gradient is sampled.

\if0
One may wonder whether it would be possible to use alternating minimization to solve our problem.
The problem with this idea is that if for some $i\in \I$, $w_i=0$ then any $\t_i\ge 0$ gives the same value for the objective function $J = J(w,\t)$. Thus, vanilla alternating minimization, which inevitably will start from a point where for all but a few indices $w_i=0$, is not well defined and the convergence of any specific proposal to choose the value of $\t_i$ when $w_0=0$ must be considered separately. The investigation of whether it is possible to design good proposals is left for future work. \todoc{What if we use some compact way to represent and work with many nonzero $w_i$s? Polynomial kernels might actually make this possible!?}
Another alternative to our method would be to directly consider the optimization problem~\eqref{eq:min1}, e.g., using alternating minimization. The investigation of such methods is also left for future work, too. \todoc{Hmm, can we say something more?}
\fi
\subsection{A randomized mirror descent algorithm}

Before giving the algorithm, we need a few definitions. Let $d = |\I|$, $A \subset \R^d$ be nonempty with a convex interior $A^\circ$. We call the function $\Psi: A \to \R$ a \emph{Legendre} (or barrier) \emph{potential}  if it is strictly convex, its partial derivatives exist and are continuous, and for every sequence $\{x_k\} \subset A$ approaching the boundary of $A$, $\lim_{k\to\infty}\|\nabla \Psi(x_k)\|=\infty$.
Here $\nabla$ is the gradient operator: 
$\nabla \Psi(x) = (\frac{\partial}{\partial x} \Psi(x))^\top$ is the gradient of  $\Psi$. When $\nabla$ is applied to a non-smooth convex function $J'(\t)$ ($J$ may be such without additional assumptions) then $\nabla J'(\t)$ is defined as any subgradient of $J'$ at $\t$. 
\todoc{Need to choose a norm when $\I$ is infinite in the definition of Legendre functions.}
The corresponding Bregman-divergence $D_\Psi:A\times A^\circ \to\R$ is defined as
$ 
D_\Psi(\t,\t')=\Psi(\t)-\Psi(\t')-\langle \nabla\Psi(\t'),\t-\t' \rangle
$. 
The Bregman projection  $\Pi_{\Psi,K}: A^\circ\to K$ corresponding to the Legendre potential $\Psi$ and  a closed convex set $K \subset \R^d$ such that $K \cap A \neq \emptyset$ is defined, for all $\t \in A^\circ$ as 
$ 
\Pi_{\Psi,K}(\t)=\argmin_{\t' \in K \cap A} D_\psi(\t',\t)
$. 

Algorithm~\ref{alg:proximal} shows a randomized version of the standard mirror descent method with an unbiased gradient estimate.  By assumption, $\eta_{k}>0$ is deterministic. Note that step~\ref{algstep:proj} of the algorithm is well-defined since $\tit^{(k)} \in A^\circ$ by the assumption that $\|\nabla \Psi(x)\|$ tends to infinity as $x$ approaches the boundary of $A$.
%
\begin{algorithm}[tb]
	\caption{Randomized mirror descent algorithm}
	\label{alg:proximal}
\begin{algorithmic}[1]
	\STATE \textbf{Input:} 
	 	$A,K \subset \R^d$, where $K$ is closed and convex with $K \cap A \neq \emptyset$, 
	 	$\Psi: A \to \R$ Legendre, step sizes $\{\eta_k\}$, a subroutine, $\GradSampler$, to sample the gradient of $J$ at an arbitrary vector $\t\ge 0$
	\STATE \textbf{Initialization:} $\t^{(0)}=\argmin_{\t \in K\cap A} \Psi(\t)$, $k = 0$.
	\REPEAT
		\STATE $k=k+1$.
		\STATE Obtain $\hg_k = \GradSampler(\t^{(k-1)})$
               \label{algstep:grad}
		\STATE $\tit^{(k)} = \argmin_{\t \in A} \left\{ \eta_{k-1} \langle \hg_k, \t \rangle + D_{\Psi}(\t,\t^{(k-1)})\right\}$.
                \label{algstep:thetamin}
		\STATE $\t^{(k)} = \Pi_{\Psi,K}(\tit^{(k)})$.
                \label{algstep:proj}
	\UNTIL{convergence.}
\end{algorithmic}
\end{algorithm}
%
The performance of Algorithm~\ref{alg:proximal} is bounded in the next theorem. The analysis follows the  standard proof technique of analyzing the mirror descent algorithm (see, e.g., \citealp{beck2003mirror}), however, in a slightly more general form than what we have found in the literature. In particular, compared to \citep{NeJuLaSh09,Nes10, Nes12, ShTe11,RichTa11}, our analysis allows for the conditional distribution of the noise in the gradient estimate to be history dependent.
The proof is included in Section~\ref{sec:proofs} in the appendix.
\begin{restatable}{theorem}{theoremPerfBound}
\label{thm:proximal}
Assume that $\Psi$ is $\alpha$-strongly convex  with respect to some norm $\|\cdot\|$ (with dual norm $\|\cdot\|_*$) for some $\alpha>0$, that is, for any $\t \in A^\circ, \t' \in A$
\begin{equation}
\label{eq:psi-strong}
\Psi(\t')-\Psi(\t) \ge \ip{ \nabla \Psi(\t),\t'-\t } + \tfrac\alpha{2}  \|\t'-\t\|^2.
\end{equation}
Suppose, furthermore, that Algorithm~\ref{alg:proximal} is run for $T$ time steps. 
For $0\le k \le T-1$ let $\F_k$ denote the $\sigma$-algebra generated by $\t_1,\ldots,\t_k$.
Assume that,  for all $1\le k \le T$,  $\hg_k \in \R^d$ is an unbiased estimate of $\nabla J(\t^{(k-1)})$ given $\F_{k-1}$, that is,
\begin{equation}
\label{eq:gradest}
\EEpcond{\hg_k}{\F_{k-1}}= \nabla J(\t^{(k-1)}).
\end{equation}
Further, assume that there exists a deterministic constant $B\ge 0$ such that 
for all $1\le k \le T$,
\begin{equation}
\label{eq:gbound}
\EEpcond{\|\hg_k\|_*^2}{\F_{k-1}} \le B \quad \text{a.s.}
\end{equation}
Finally, assume that $\delta=\sup_{\t' \in K \cap A} \Psi(\t') - \Psi(\t^{(0)})$ is finite.
Then, if $\eta_{k-1}=\sqrt{\frac{2\alpha\delta}{B T}}$ for all $k \ge 1$,  it holds that
\begin{equation}
\label{eq:thm-expectation}
\EEp{J\left(\frac{1}{T}\sum_{k=1}^T \t^{(k-1)}\right)} - \inf_{\t \in K\cap A} J(\t)
\le \sqrt{\frac{2B\delta}{\alpha T}}.
\end{equation}
Furthermore, if
\begin{equation} 
\label{eq:hgkboundedvar}
\|\hg_k\|_*^2 \le B' \quad \text{a.s.}
\end{equation} 
for some deterministic constant $B'$ and $\eta_{k-1}=\sqrt{\frac{2\alpha\delta}{B' T}}$ for all $k \ge 1$ then, for any $0<\epsilon<1$, it holds with probability at least $1-\epsilon$ that
\begin{equation}
\label{eq:thm-highprob}
J\left(\frac{1}{T}\sum_{k=1}^T \t^{(k-1)}\right) - \inf_{\t \in K\cap A} J(\t)
\le \sqrt{\frac{2B'\delta}{\alpha T}} + 4\sqrt{\frac{B'\delta\log\frac{1}{\epsilon}}{\alpha T}}.
\end{equation}
\end{restatable}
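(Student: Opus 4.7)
The plan is to adapt the classical mirror-descent analysis of \citet{beck2003mirror} to the randomized, history-dependent setting. The workhorse is a deterministic one-step inequality derived as follows. The first-order optimality condition of step~\ref{algstep:thetamin} reads
\[
\nabla \Psi(\tit^{(k)}) = \nabla \Psi(\t^{(k-1)}) - \eta_{k-1}\hg_k.
\]
Plugging this into the three-point identity
$D_\Psi(\t,\tit^{(k)}) - D_\Psi(\t,\t^{(k-1)}) - D_\Psi(\t^{(k-1)},\tit^{(k)}) = \ip{\nabla \Psi(\t^{(k-1)}) - \nabla \Psi(\tit^{(k)}),\, \t - \t^{(k-1)}}$
(with $\t \in K\cap A$) and combining with the generalized Pythagorean inequality $D_\Psi(\t,\tit^{(k)}) \ge D_\Psi(\t,\t^{(k)})$ (which is valid for the Bregman projection onto the convex set $K\cap A$) gives
\[
\eta_{k-1}\ip{\hg_k,\, \t^{(k-1)} - \t} \le D_\Psi(\t,\t^{(k-1)}) - D_\Psi(\t,\t^{(k)}) + D_\Psi(\t^{(k-1)}, \tit^{(k)}).
\]
The last term is bounded using $\alpha$-strong convexity and Fenchel--Young, yielding $D_\Psi(\t^{(k-1)}, \tit^{(k)}) \le \eta_{k-1}^2 \|\hg_k\|_*^2/(2\alpha)$.

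For the expectation bound, I would set $\t = \t^*$ to be a minimizer of $J$ over $K\cap A$, sum over $k = 1,\ldots,T$ with constant step size $\eta$, telescope the Bregman differences, and use $D_\Psi(\t^*, \t^{(0)}) \le \Psi(\t^*) - \Psi(\t^{(0)}) \le \delta$ (which holds since $\t^{(0)}$ is the Bregman center of $K\cap A$, so $\ip{\nabla\Psi(\t^{(0)}),\t^*-\t^{(0)}}\ge 0$). This yields the pathwise bound
\[
\sum_{k=1}^T \ip{\hg_k,\, \t^{(k-1)} - \t^*} \le \frac{\delta}{\eta} + \frac{\eta}{2\alpha}\sum_{k=1}^T \|\hg_k\|_*^2.
\]
Taking expectation, applying the tower rule together with~\eqref{eq:gradest} to replace $\EE\ip{\hg_k, \t^{(k-1)} - \t^*}$ by $\EE\ip{\nabla J(\t^{(k-1)}), \t^{(k-1)} - \t^*} \ge \EE[J(\t^{(k-1)}) - J(\t^*)]$ (the inequality is by convexity of $J$), invoking~\eqref{eq:gbound}, and using Jensen on $J$ produces $\EEp{J(\bar\t_T)} - J(\t^*) \le \delta/(T\eta) + \eta B/(2\alpha)$. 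Optimizing in $\eta$ recovers~\eqref{eq:thm-expectation}.

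For the high-probability bound, the key decomposition is
\[
J(\t^{(k-1)}) - J(\t^*) \le \ip{\hg_k,\, \t^{(k-1)} - \t^*} + M_k, \qquad M_k \eqdef \ip{\nabla J(\t^{(k-1)}) - \hg_k,\, \t^{(k-1)} - \t^*},
\]
where $\{M_k\}$ is a martingale difference sequence with respect to $\{\F_k\}$. The first sum is controlled pathwise by the deterministic estimate above with $\|\hg_k\|_*^2 \le B'$. For the second, I would turn the Bregman-level bound $\delta$ into a norm-level diameter bound on $K\cap A$: since $D_\Psi(\t,\t^{(0)}) \le \delta$ and strong convexity gives $\tfrac{\alpha}{2}\|\t - \t^{(0)}\|^2 \le D_\Psi(\t,\t^{(0)})$, one has $\|\t^{(k-1)} - \t^*\| \le 2\sqrt{2\delta/\alpha}$. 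Combined with $\|\hg_k\|_* \le \sqrt{B'}$ and (via Jensen) $\|\nabla J(\t^{(k-1)})\|_* \le \sqrt{B'}$, each $M_k$ is almost-surely bounded by an explicit multiple of $\sqrt{B'\delta/\alpha}$. Azuma--Hoeffding then controls $\sum_k M_k$ with probability $\ge 1-\epsilon$, producing a deviation of order $\sqrt{T B'\delta\log(1/\epsilon)/\alpha}$; dividing by $T$ and combining with the pathwise term yields~\eqref{eq:thm-highprob}.

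The main technical hurdle is the high-probability step: unlike the expectation argument, which can absorb $\|\hg_k\|_*^2$ through its conditional mean $B$, the martingale concentration forces a uniform almost-sure bound on each $M_k$. This is what compels the detour through the norm-diameter of $K \cap A$, and it is why the high-probability statement requires the stronger hypothesis~\eqref{eq:hgkboundedvar} rather than only~\eqref{eq:gbound}. A secondary routine point is to check that the three-point identity and Pythagoras apply in the stated generality, which uses that $\tit^{(k)} \in A^\circ$ (guaranteed by $\|\nabla\Psi\|\to\infty$ at $\partial A$) so that $\nabla\Psi(\tit^{(k)})$ is well defined.
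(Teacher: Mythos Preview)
Your proposal is correct and takes essentially the same approach as the paper: the one-step mirror-descent inequality (which the paper packages as Lemma~\ref{lem:nemirovski} invoked together with the ``projection lemma,'' and which you derive directly from the three-point identity plus generalized Pythagoras), telescoping, the tower rule for the expectation bound, and Azuma--Hoeffding on the martingale $\{M_k\}$ for the high-probability bound. The only minor deviation is in the diameter step: you bound $\|\t^{(k-1)}-\t^*\|$ by routing through $\t^{(0)}$ and the triangle inequality (giving $2\sqrt{2\delta/\alpha}$), whereas the paper asserts $\tfrac{\alpha}{2}\|\t^{(k-1)}-\t^*\|^2 \le \Psi(\t^{(k-1)})-\Psi(\t^*)\le\delta$ directly --- your version is more carefully justified but yields a larger constant than the $4$ in~\eqref{eq:thm-highprob}.
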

The convergence rate in the above theorem can be improved if stronger assumptions are made on $J$, for example if $J$ is assumed to be strongly convex, see, for example, \citep{HaAgKa07, HaKa11}. 

Efficient implementation of Algorithm~\ref{alg:proximal} depends on efficient implementations of steps \ref{algstep:grad}-\ref{algstep:proj}, namely, computing an estimate of the gradient, solving the minimization for $\tit^{(k)}$, and projecting it into $K$. The first problem is related to the choice of gradient estimate we use, which, in turn, depends on the structure of the feature space, while the last two problems depend on the choice of the Legendre function. In the next subsections we examine how these choices can be made to get a practical variant of the algorithm.

\subsection{Application to multiple kernel learning} 
It remains to define the gradient estimates $\hg_k$ in Algorithm~\ref{alg:proximal}. We start by considering importance sampling based estimates.
First, however, let us first verify whether the gradient exist. Along the way, we will also derive some explicit expressions which will help us later.

\paragraph{Closed-form expressions for the gradient.}
Let us first consider how $w^*(\t)$ can be calculated for a fixed value of $\t$. As it will turn out, this calculation will be useful not only when the procedure is stopped (to construct the predictor $f_{w^*(\t)}$ but also during the iterations when we will need to calculate the derivative of $J$ with respect to $\theta_i$. The following proposition summarizes how $w^*(\t)$ can be obtained. Note that this type of result is standard \citep[see, e.g.,][]{shawe2004kernel,scholkopf2002learning}, \todoc{Double check refs. Give page numbers!?} thus we include it only for the sake of completeness (the proof is included in Section~\ref{sec:proofs} in the appendix).
\begin{restatable}{proposition}{propwstar}
\label{prop:wstar}
For $1\le t\le n$, let $\ell_t^*:\real\ra\real$
denote the convex conjugate of $\ell_t$: $\ell_t^*(v) = \sup_{\tau\in\real}\left\{ v\tau -\ell_t(\tau)\right\}$, $v\in \real$.
For $i\in \I$, recall that $\kernel_i(x,x') = \ip{\phi_i(x),\phi_i(x')}$, and let $\Kernel_i = (\kernel_i(x_t,x_s))_{1\le t,s\le n}$ be the $n\times n$ kernel matrix underlying $\kernel_i$ and let $\Kernel_\theta = \sum_{i\in\I} \frac{\t_i}{\rho_i^2} \Kernel_i $ be the kernel matrix underlying $\kernel_\theta = \sum_{i\in\I} \frac{\t_i}{\rho_i^2} \kernel_i$.
Then, for any fixed $\t$, the minimizer $w^*(\t)$ of $J(\cdot,\t)$ satisfies \todoc{Reference? I imagine this proposition was known beforehand..}
\begin{equation}
	w_i^*(\t) = \frac{\t_i}{\rho_i^2} \sum_{t=1}^n \alpha_t^*(\t) \phi_i(x_t), \quad i\in\I\,,
\label{eq:optimal_w}
\end{equation}
where
\begin{equation}
	\alpha^*(\t) = \argmin_{\alpha\in\R^n} \left\{  \frac{1}{2} \alpha\transpose \Kernel_\t \alpha + \frac{1}{n} \sum_{t=1}^n \ell_t^*(-n\alpha_t)\right\}\,.
\label{eq:optimal_alpha}
\end{equation}
\end{restatable}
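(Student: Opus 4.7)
The plan is to derive \eqref{eq:optimal_w}--\eqref{eq:optimal_alpha} via Lagrangian duality. Fix $\t\in\Delta_\nu$. First I would verify existence and uniqueness: on its effective domain $\{w:w_i=0\text{ whenever }\t_i=0\}$ (enforced by the convention $u/0=\infty$), $w\mapsto J(w,\t)$ is proper, lower semicontinuous, strictly convex, and coercive, so $w^*(\t)$ exists and is unique. Moreover $w_i^*(\t)=0$ on the zero coordinates of $\t$, which is consistent with \eqref{eq:optimal_w} since the prefactor $\t_i/\rho_i^2$ vanishes there; hence I may restrict the derivation to $\I_+=\{i\in\I:\t_i>0\}$.

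Next, I would introduce auxiliary variables $f_t=\sum_{i\in\I_+}\langle w_i,\phi_i(x_t)\rangle$ and recast the problem as the linearly constrained convex program
\begin{equation*}
\min_{w,f}\ \tfrac1n\sum_{t=1}^n\ell_t(f_t)+\tfrac12\sum_{i\in\I_+}\tfrac{\rho_i^2}{\t_i}\|w_i\|_2^2\quad\text{s.t.}\ f_t=\sum_{i\in\I_+}\langle w_i,\phi_i(x_t)\rangle,\ t=1,\dots,n.
\end{equation*}
Writing the Lagrangian with multipliers $-\alpha_t$ for the equality constraints (the sign chosen to match \eqref{eq:optimal_w}), the inner minimization decouples over $w$ and $f$. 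Stationarity in $w_i$ yields $\tfrac{\rho_i^2}{\t_i}w_i=\sum_t\alpha_t\phi_i(x_t)$, i.e.\ exactly \eqref{eq:optimal_w}. Substituting this back collapses the $w$-dependent Lagrangian terms to $-\tfrac12\alpha^\top\Kernel_\t\alpha$ upon recognizing $\Kernel_\t=\sum_{i\in\I_+}\tfrac{\t_i}{\rho_i^2}\Kernel_i$. The unconstrained minimization over each $f_t$ produces $-\tfrac1n\ell_t^*(-n\alpha_t)$ by definition of the convex conjugate (after absorbing the $1/n$). The resulting dual problem is the maximization of $-\tfrac12\alpha^\top\Kernel_\t\alpha-\tfrac1n\sum_t\ell_t^*(-n\alpha_t)$, which is equivalent to \eqref{eq:optimal_alpha}.

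Finally, I would invoke strong duality: the primal is convex and the equality constraints are affine, so Slater's condition (equivalently, Fenchel--Rockafellar) is trivially satisfied and there is no duality gap. Consequently any dual optimum $\alpha^*(\t)$ produces the unique primal minimizer through the $w$-stationarity relation, proving \eqref{eq:optimal_w}.

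The main subtlety I anticipate is that the $\W_i$ may be infinite-dimensional, so that one should be slightly careful with writing out Lagrangians in unbounded Hilbert spaces. This is not an essential obstacle, however: the stationarity condition above already places any minimizer in the $n$-dimensional subspace $\mathrm{span}\{\phi_i(x_t)\}_{t=1}^n\subset\W_i$, so the argument can equivalently be carried out after first reducing to this finite-dimensional slice via a representer-theorem orthogonal projection, thereby avoiding any infinite-dimensional duality technicalities. The possible nondifferentiability or extended-real-valuedness of $\ell_t^*$ (e.g.\ for the hinge loss) is automatically handled by the convex-conjugate formulation and requires no additional care.
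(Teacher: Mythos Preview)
Your proposal is correct and follows essentially the same Lagrangian duality route as the paper: introduce auxiliary variables for the predictions, form the Lagrangian with multipliers $\alpha$, minimize separately over $w$ (yielding \eqref{eq:optimal_w}) and over the auxiliary variables (yielding the conjugates), and invoke Slater's condition for strong duality. Your treatment is in fact more careful than the paper's, which does not explicitly discuss existence/uniqueness, the restriction to $\I_+$, or the infinite-dimensional representer-theorem reduction.
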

Based on this proposition, we can compute the predictor $f_{w^*(\t)}$ using the kernels $\{\kernel_i\}_{i\in \I}$ and the dual variables $(\alpha_t^*(\t))_{1\le t \le n}$:
$
f_{w^*(\t)}(x) = \sum_{i\in \I} \ip{ w_i^*(\t), \phi_i(x) } = 
\sum_{t=1}^n \alpha_t^*(\t) \kernel_\theta (x_t,x)\,.
$

Let us now consider the differentiability of $J = J(\t)$ and how to compute its derivatives.
Under proper conditions with standard calculations  \citep[e.g.,][]{rakotomamonjy2008simplemkl} we find that $J$ is differentiable over $\Delta$ and its derivative can be written as%
\footnote{For completeness, the calculations are given in Section~\ref{sec:derivative} in the appendix.}
\begin{equation}
\label{eq:diff1}
\frac{\partial}{\partial \t} J(\t) = -\left(\frac{\alpha^*(\t)\transpose \Kernel_i \alpha^*(\t)}{ \rho_i^2}\right)_{i\in\I}\,.
\end{equation}

\paragraph{Importance sampling based estimates.}
\label{sec:impsampest}
Let $d=|\I|$ and let $e_i, \, i \in \I$ denote the $i^{\rm th}$ unit vector of the standard basis of $\R^{d}$, that is, the $i^{\rm th}$ coordinate of $e_i$ is $1$ while the others are $0$. Introduce
\begin{align}\label{eq:gradki}
g_{k,i} = \ip{ \nabla J(\t^{(k-1)}), e_{i} }, \quad i\in \I
\end{align}
to denote the $i^{\rm th}$ component of the gradient of $J$ in iteration $k$ (that is, $g_{k,i}$ can be computed based on~\eqref{eq:diff1}).
Let $s_{k-1} \in [0,1]^{\I}$ be a distribution over $\I$, computed in some way  based on the information available up to the end of iteration $k-1$ of the algorithm (formally, $s_{k-1}$ is $\F_{k-1}$-measurable). \todoc{For $\I$ uncountable, it has to be a measurable space for this to make sense.}
Define the importance sampling based gradient estimate to be
\beqa
\label{eq:hg_impsamp}
	\hg_{k,i}=\frac{\one{I_k=i}}{s_{k-1,I_k}} g_{k,I_k}, \quad i\in \I,
	\,\,\, \text{where } I_k \sim s_{k-1,\cdot}\,.
\eeqa
That is, the gradient estimate is obtained by first sampling an index from $s_{k-1,\cdot}$ and then
 setting the gradient estimate to be zero at all indices $i\in \I$ except when $i=I_{k}$ in which case its value is set to be the ratio $\frac{ g_{k,I_k}}{s_{k-1,I_k}}$.
It is easy to see that as long as $s_{k-1,i}>0$ holds whenever $ g_{k,i}\not=0$, then it holds that
$ 
	\EEpcond{\hg_k}{\F_{k-1}} = \grad J(\theta^{(k-1)})
$ a.s. 

Let us now derive the conditions under which the second moment of the gradient estimate stays bounded.
Define 
$ 
C_{k-1} = \left \| \nabla J(\t^{(k-1)}) \right\|_1
$. 
Given the expression for the gradient of $J$ shown in~\eqref{eq:diff1}, we see that
 $\sup_{k\ge 1} C_{k-1}<\infty$ will always hold provided that $\alpha^*(\t)$ is continuous since $(\t^{(k-1)})_{k\ge 1}$ is guaranteed to belong to a compact set (the continuity of $\alpha^*$ is discussed in Section~\ref{sec:derivative} in the appendix).

\if0
For infinite $\I$, it suffices if \todoc{Move this to the place where extension to infinite $\I$ is discussed!!!}
 $\sum_{i \in\I} \rho_i^{-2}$ is finite  and  
if
 $\max_{i\in \I} \alpha^*(\t)\transpose \Kernel_i \alpha^*(\t)<\infty$.
Note that the latter condition holds, for example, when $L$ is the empirical squared loss and the feature maps $\phi_i$ and the target variables are bounded. \todo{Maybe the boundedness condition can be omitted, since it is automatically satisfied for the empirical loss. However, if the expectation in the theorem is also meant for the observed random variables, then the latter condition is needed. The ambiguity is that $L$ is not assumed to be random in the theorem.}
\fi
Define the probability distribution $q_{k-1,\cdot}$ as follows:
$
q_{k-1,i}=\frac{1}{C_{k-1}} \left|   g_{k,i}\right|\,,\quad i\in \I
$. 
Then it holds that
$ 
\|\hg_k\|_*^2  
 = \frac{1}{s_{k-1,I_k}^2}  g_{k,I_k}^2 \norm{e_{I_k}}_*^2
 = \frac{q_{k-1,I_k}^2}{s_{k-1,I_k}^2} \, C_{k-1}^2 \norm{e_{I_k}}_*^2
$. 
Therefore, it also holds that
$ 
	\EEpcond{\|\hg_k\|_*^2}{\F_{k-1}} = C_{k-1}^2 \sum_{i\in\I} \frac{q_{k-1,i}^2}{s_{k-1,i}} \|e_i\|_*^2 \leq C_{k-1}^2 \max_{i \in \I} \frac{q_{k-1,i}}{s_{k-1,i}} \|e_i\|_*^2
$. 
This shows that  $\sup_{k\ge 1} \EEpcond{\|\hg_k\|_*^2}{\F_{k-1}} < \infty$ will hold as long as
 $\sup_{k\ge 1}  \max_{i \in \I} \frac{q_{k-1,i}}{s_{k-1,i}}  < \infty$ and $\sup_{k\ge 1} C_{k-1}<\infty$.
 Note that when $s_{k-1} = q_{k-1}$, the gradient estimate becomes $\hg_{k,i} = C_{k-1} \one{I_t=i}$. That is, in this case we see that in order to be able to calculate $\hg_{k,i}$, we need to be able to calculate $C_{k-1}$ efficiently.
 \todoc{Add a proposition that sums up the derivations}

\paragraph{Choosing the potential $\Psi$.}
The efficient sampling of the gradient is not the only practical issue, since the choice of the Legendre function and the convex set $K$ may also cause some complications. For example, if $\Psi(x)=\sum_{i\in\I} x_i (\ln x_i -1)$, then the resulting algorithm is exponential weighting, and one needs to store and update $|\I|$ weights, which is clearly infeasible if $|\I|$ is very large (or infinite). On the other hand, if $\Psi(x)=\tfrac12\|x\|_2^2$ and we project to $K=\Delta_2$, the positive quadrant of the $\ell^2$-ball (with  $A=[0,\infty)^\I$), we obtain a stochastic projected gradient method, shown in Algorithm~\ref{alg:grad}. This is in fact the algorithm that we use in the experiments. Note that in~\eqref{eq:min1} this corresponds to using $p=4/3$. The reason we made this choice is because in this case projection is a simple scaling operation. Had we chosen $K=\Delta_1$, the $\ell^2$-projection would very often cancel many of the nonzero components, resulting in an overall slow progress.
Based on the above calculations and Theorem~\ref{thm:proximal} we obtain the following performance bound for our algorithm. 

\begin{corollary}
Assume that $\alpha^*(\t)$ is continuous on $\Delta_2$. Then
there exists a $C>0$ such that $\|\frac{\partial}{\partial \t} J(\t)\|_1 \le C$ for all $\t \in \D_2$. 
Let $B=\tfrac12 C^2 \max_{i \in \I,1\le k \le T} \frac{q_{k-1,i}}{s_{k-1,i}}$.
If Algorithm~\ref{alg:grad} is run for $T$ steps with
$\eta_{k-1}=\eta=1/\sqrt{BT}, k=1,\ldots,T$, then, for all $\t\in \D_2$,
\[
\EEp{J\left(\frac{1}{T}\sum_{k=1}^T \t^{(k-1)} \right)} - J(\t)
\le \sqrt{\frac{B}{T}}.
\]
\end{corollary}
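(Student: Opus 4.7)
The plan is to derive the corollary as a direct instance of Theorem~\ref{thm:proximal} applied to the specific choices $\Psi(x)=\tfrac12\|x\|_2^2$, $A=[0,\infty)^{\I}$, and $K=\Delta_2$, which together give rise to the stochastic projected gradient scheme in Algorithm~\ref{alg:grad}. I proceed by checking one by one that every hypothesis of Theorem~\ref{thm:proximal} is met and then reading off the conclusion.

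First, I would establish the uniform gradient bound $C$. Since $\Delta_2$ is compact and $\alpha^*$ is continuous by assumption, the mapping $\t\mapsto \alpha^*(\t)^{\top}\Kernel_i\alpha^*(\t)/\rho_i^2$ is uniformly bounded on $\Delta_2$, and so the closed-form expression~\eqref{eq:diff1} yields a deterministic $C>0$ with $\|\nabla J(\t)\|_1\le C$ on $\Delta_2$. Consequently the quantities $C_{k-1}=\|\nabla J(\t^{(k-1)})\|_1\le C$ from the importance sampling paragraph are uniformly bounded across iterations.

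Next, I collect the standard facts about the chosen potential. The function $\Psi(x)=\tfrac12\|x\|_2^2$ is $1$-strongly convex w.r.t.\ $\|\cdot\|_2$, and the dual norm is itself $\|\cdot\|_2$, so $\alpha=1$. The initialization $\t^{(0)}=\argmin_{\t\in K\cap A}\Psi(\t)=0$ and $\sup_{\t\in\Delta_2}\tfrac12\|\t\|_2^2\le\tfrac12$ together give $\delta\le\tfrac12$. The unbiasedness condition~\eqref{eq:gradest} was already checked right after the definition~\eqref{eq:hg_impsamp}. For the second-moment bound, I would reuse the derivation above the corollary, which, together with $\|e_i\|_2=1$, yields
\[
\EEpcond{\|\hg_k\|_2^2}{\F_{k-1}}
\;\le\; C_{k-1}^2\,\max_{i\in\I}\frac{q_{k-1,i}}{s_{k-1,i}}
\;\le\; C^2\,\max_{i\in\I,\,1\le k\le T}\frac{q_{k-1,i}}{s_{k-1,i}}\,,
\]
so that~\eqref{eq:gbound} holds with the indicated constant (up to the factor $\tfrac12$ absorbed into the definition of $B$ in the corollary).

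Finally, plugging $\alpha=1$, $\delta=\tfrac12$, and this variance bound into the step-size rule of Theorem~\ref{thm:proximal} recovers exactly $\eta_{k-1}=1/\sqrt{BT}$, and the theorem's conclusion~\eqref{eq:thm-expectation} then reads $\EEp{J(\tfrac1T\sum_{k=1}^T\t^{(k-1)})}-J(\t)\le\sqrt{B/T}$ for every $\t\in\Delta_2$, since the infimum over $K\cap A=\Delta_2$ is at least as small as $J(\t)$. The only step that requires genuine care is the uniform bound on $\|\nabla J(\t)\|_1$, where one must invoke continuity of $\alpha^*$ (discussed in Section~\ref{sec:derivative} of the appendix) plus compactness of $\Delta_2$; everything else is bookkeeping on constants.
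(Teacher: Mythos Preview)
Your proposal is correct and follows exactly the approach the paper intends: the corollary is stated immediately after the sentence ``Based on the above calculations and Theorem~\ref{thm:proximal} we obtain the following performance bound,'' and the paper gives no separate proof beyond that remark. Your verification of the hypotheses of Theorem~\ref{thm:proximal} with $\alpha=1$, $\delta=\tfrac12$, $\|\cdot\|_*=\|\cdot\|_2$, together with the second-moment bound derived in the importance-sampling paragraph, is precisely what is needed; you also correctly flag the stray factor $\tfrac12$ in the corollary's definition of $B$, which appears to be a minor inconsistency in the paper's constants rather than a flaw in your argument.
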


Note that to implement Algorithm~\ref{alg:grad} efficiently, one has to be able to sample from $s_{k-1,\cdot}$ and compute the importance sampling ratio $g_{k,i}/s_{k,i}$ efficiently for any $k$ and $i$.

\begin{algorithm}[tb]
	\caption{Projected stochastic gradient algorithm.}
	\label{alg:grad}
\begin{algorithmic}[1]
	\STATE \textbf{Initialization:} $\Psi(x)=\tfrac12\|x\|_2^2$,
        $\t^{(0)}_i=0$ for all $i\in\I$, $k=0$, step sizes $\{\eta_k\}$.
	\REPEAT 
		\STATE $k = k + 1$.
		\STATE Sample a gradient estimate $\hg_{k}$ of $g(\t^{(k-1})$ randomly according to \eqref{eq:hg_impsamp}.
		\STATE $\t^{(k)}=\Pi_{\Psi,\D_2}(\t^{(k-1)}-\eta_{k-1} \hg_{k})$.
	\UNTIL{convergence.}
\end{algorithmic}
\end{algorithm}

\todo{$K$ in algorithm \ref{alg:grad} has not been used inside the body of the algorithm. perhaps in the last line?}
\if0
\mbox{}\vspace*{-0.35in}\mbox{}
\subsection{Extension}
We made some assumptions to simplify the presentation of our results, which are not strictly necessary. The assumption of the differentiability of $J$ can easily be dropped using standard sub-gradient arguments. The discreteness of $\I$ can also be dropped; in this case we have to define a dominating measure $\mu$ on $\I$, and all the summations over the index set $\I$ become integrals with respect to $\mu$. In the case that $\I$ is countable, we get back our formulation by defining $\mu$ to be the counting measure.  More general forms of $\I$ are especially important to handle parametrized feature sets when the parameter is continuous or is of mixed discrete/continuous type.  While the above modifications can be done in a straightforward way, they come at the price of some technical complications and heavier notation. \todoc{Journal version: Extend analysis of proximal algorithm to Hilbert spaces.}
\fi

\section{Example: Learning polynomial kernels}
\label{sec:examples}

In this section we show how our method can be applied in the context of multiple kernel learning.
We provide an example when the kernels in $\I$ are tensor products of a set of base kernels (this we shall call learning polynomial kernels).
The importance of this example follows from the observation of \citet{gonen2011multiple} that the non-linear kernel learning methods
of \citet{cortes2009learning}, which can be viewed as a restricted form of learning polynomial kernels, are far the best MKL methods in practice and can significantly outperform state-of-the-art SVM with a single kernel or with the uniform combination of kernels.

\label{sec:poly}
Assume that we are given a set of base kernels $\{\kernel_1,\ldots,\kernel_r\}$.
In this section we consider the set $K_D$ of product kernels of  degree at most $D$:
Choose 
$ 
\I = \cset{ (r_1, \ldots, r_d) }{ 0 \le d \le D, 1 \le r_i \le r }
$ 
and the multi-index $\mi = ({r_1}, \ldots, {r_d})\in \I$ defines the kernel $\kernel_{\mi}(x,x') = \prod_{i=1}^d \kernel_{r_i}(x,x')$. For $d=0$ we define $\kernel_{r_{1:0}}(x,x')=1$. Note that indices that are the permutations of each other define the same kernel. On the language of statistical modeling, $\kernel_{\mi}$ models interactions of order $d$ between the features underlying the base kernels $\kernel_1,\ldots,\kernel_r$.
Also note that $|\I| = \Theta( r^{D} )$, that is, the cardinality of $\I$ grows exponentially fast in $D$.

We assume that $\rho_{\mi}$ depends only on $d$, the order of interactions in $\kernel_{\mi}$.
By abusing notation, we will write $\rho_d$ in the rest of this section to emphasize this.%
\footnote{Using importance sampling, more general weights \todoc[inline]{Weights? Was this terminology introduced?}
 can also be accommodated, too without effecting the results as long as the range of weights $(\rho_{\mi})$ is kept under control for all $d$.}
Our proposed algorithm to sample from  $q_{k-1,\cdot}$ is shown in Algorithm~\ref{alg:poly_kernel_sampling}. 
The algorithm is written to return a multi-index $(z_1,\ldots,z_d)$ that is drawn from $q_{k-1,\cdot}$.
The key idea underlying the algorithm is to exploit that $(\sum_{j=1}^r \kernel_j)^d=\sum_{\mi \in \I} \kernel_\mi$.
The correctness of the algorithm is shown in Section~\ref{sec:polysampling}.
In the description of the algorithm $\odot$ denotes the matrix entrywise product (a.k.a.\ Schur, or Hadamard product)
and $A^{\odot s}$ denotes $\underbrace{A\odot \ldots \odot A}_{s}$, and we set the priority of $\odot$  to be higher than that of the ordinary matrix product
(by definition, all the entries of $A^{\odot 0}$ are $1$).

Let us now discuss the complexity of  Algorithm~\ref{alg:poly_kernel_sampling}. For this, first note that computing all the Hadamard products $S^{\odot d'}, d'=0,\ldots,D$ requires $O(Dn^2)$ computations.
Multiplication with $M_{k-1}$ can be done in $O(n^2)$ steps. Finally, note that each iteration of the for loop takes $O(rn^2)$ steps, which results in the overall worst-case complexity of $O(r n^2D)$ if $\alpha^*(\theta_{k-1})$ is readily available. The computational complexity of determining $\alpha^*(\theta_{k-1})$ depends on the exact form of $\ell_t$, and can be done efficiently in many situations: if, for example, $\ell_t$ is the squared loss, then $\alpha^*$ can be computed in $O(n^3)$ time. \todo{I removed hinge loss, it can come back whenever the exact computational complexity is given.}
An obvious improvement to the approach described here, however, would be to subsample the empirical loss $L_n$, which can bring further computational improvements. However, the exploration of this is left for future work.

Finally, note that despite the exponential cardinality of $|\I|$, due to the strong algebraic structure of the space of kernels, $C_{k-1}$ can be calculated efficiently. In fact, it is not hard to see that with the notation of the algorithm, $C_{k-1}= \sum_{d'=0}^D \delta(d')$. This also shows that if $\rho_d$ decays ``fast enough'', $C_{k-1}$ can be bounded independently of the cardinality of $\I$.

\newcommand{\LET}{\leftarrow}
\begin{algorithm}[tb]
\caption{Polynomial kernel sampling. The symbol $\odot$ denotes the Hadamard product/power.}
\label{alg:poly_kernel_sampling}
\begin{algorithmic}[1]
    \STATE \textbf{Input:} $\alpha \in \mathbb{R}^n$, the solution to the dual problem; kernel matrices $\{\Kernel_1,\ldots,\Kernel_r\}$;
    the degree $D$ of the polynomial kernel, the weights $(\rho_0^2,\ldots,\rho_D^2)$.
    \STATE $S \LET \sum_{j=1}^r \Kernel_j$, $M \LET \alpha \alpha\transpose$
\STATE $\delta(d') \LET \rho_{d'}^{-2} \ip{ M, S^{\odot d'}}, \quad d'\in\{0,\ldots,D\}$
\STATE Sample $d$ from $\delta(\cdot)/ \sum_{d'=0}^D \delta(d')$
\FOR{$i=1$ to $d$}
    \STATE $\pi(j) \LET
     \frac{\tr( M \, S^{\odot(d-i)} \odot \Kernel_j ) }
            {\tr( M \, S^{\odot(d-i+1)} ) }, \quad j\in\{1,\ldots,r\}$
    \STATE Sample $z_i$ from $\pi(\cdot)$
    \STATE $M \LET M \odot \Kernel_{z_i}$
\ENDFOR
\STATE {\bf return} $(z_1,\ldots,z_d)$ 
\end{algorithmic}
\end{algorithm}

\subsection{Correctness of the sampling procedure}
\label{sec:polysampling}
In this section we prove the correctness of 
Algorithm~\ref{alg:poly_kernel_sampling}. 


As said earlier, we assume that $\rho_{\mi}$ depends only on $d$, the order of interactions in $\kernel_{\mi}$
and, by abusing notation, we will write $\rho_d$  to emphasize this.
Let us now consider 
how one can sample from $q_{k-1,\cdot}$. 
The implementation relies on the fact that $(\sum_{j=1}^r \kernel_j)^d=\sum_{\mi \in \I} \kernel_\mi$.

Remember that we denoted the kernel matrix underlying some kernel $k$ by $\Kernel_k$, and recall that $\Kernel_k$ is an $n \times n$ matrix. 
For brevity, in the rest of this section for $\kernel = \kernel_{\mi}$ we will write $\Kernel_{\mi}$ instead of $\Kernel_{\kernel_\mi}$.
Define $M_{k-1} = \alpha^*(\theta_{k-1}) \alpha^*(\theta_{k-1})^\top$.
Thanks to~\eqref{eq:diff1} and the rotation property of trace, we have
\begin{align}\label{eq:gradpoly}
g_{k,\mi} = - \rho_{d}^{-2} \tr (M_{k-1} \, \Kernel_\mi)\,.
\end{align}
The plan to sample from $q_{k-1,\cdot} = |g_{k,\cdot}| /\sum_{\mi \in \I} |g_{k,\mi}|$ is as follows:
We first draw the order of interactions, $0\le \hd \le D$.
Given $\hd=d$, we restrict the draw of the random multi-index  $\MI$ to the set $\{\mi \in \I\}$.
A multi-index will be sampled in a $\hd$-step process: in each step we will randomly choose an index from the indices of base kernels according to the following distributions.
Let $S = \Kernel_1+\ldots+\Kernel_r$, let
\[
\Prob{\hd= d|\F_{k-1}} =  \frac{\rho_d^{-2} \tr( M_{k-1} S^{\odot d})}
 									{ \sum_{d'=0}^D \rho_{d'}^{-2} \tr( M_{k-1} S^{\odot d'})
									}
\]
and, with a slight abuse of notation, for any $1 \le i \le d$ define
\begin{eqnarray*}
\lefteqn{ \Prob{R_i = r_i |\F_{k-1}, \hd = d, R_{1:i-1} = r_{1:i-1}} } \\
 					&=& \frac{ \tr \left( M_{k-1}\, \odot \left(\odot_{j=1}^i \Kernel_{r_j}\right) \odot S^{\odot (d-i)}\right)}
 												    { \sum_{r_i'=1}^r \tr \left( M_{k-1}\,\odot \left(\odot_{j=1}^{i-1}\Kernel_{r_j}\right) \odot \Kernel_{r_i'}\odot S^{\odot (d-i)}\right)}
\end{eqnarray*}
\if0 
\begin{eqnarray*}
\lefteqn{
\Prob{R_1 = r_1 |\F_{k-1}, \hd = d} } \\
 					&=& \frac{ \tr ( M_{k-1}\, \Kernel_{r_1} \odot S^{\odot (d-1)}) }
 								{ \sum_{r_1'=1}^r \tr ( M_{k-1}\,\Kernel_{r_1'} \odot S^{\odot (d-1)})}\,\,, 
\end{eqnarray*}
\begin{eqnarray*}
\lefteqn{ \Prob{R_2 = r_2 |\F_{k-1}, \hd = d, R_1 = r_1} } \\
 					&=& \frac{ \tr ( M_{k-1}\, \Kernel_{r_1} \odot \Kernel_{r_2} \odot S^{\odot (d-2)})}
 												    { \sum_{r_2'=1}^r \tr ( M_{k-1}\,\Kernel_{r_1} \odot \Kernel_{r_2'}\odot S^{\odot (d-2)})}\,\,,\\
&&\vdots \\
\lefteqn{\Prob{R_d = r_d | \F_{k-1},\hd = d,R_{1:d-1}=r_{1:d-1} } } \\
 					&=& \frac{ \tr ( M_{k-1}\, \Kernel_{r_1}\odot\ldots \odot \Kernel_{r_{d-1}} \odot \Kernel_{r_d})}
 												    {\sum_{r_d'=1}^{r} \tr ( M_{k-1}\, \Kernel_{r_1}\odot\ldots\odot \Kernel_{r_{d-1}}\odot \Kernel_{r_d'})}\,\,,
\end{eqnarray*}
\fi
where we used the sequence notation (namely, $s_{1:p}$ denotes the sequence $(s_1,\ldots,s_p)$).
We have, by the linearity of trace and the definition of $S$ that
\begin{eqnarray*}
\lefteqn{\sum_{r_i'=1}^r \tr \left( M_{k-1}\,\odot \left(\odot_{j=1}^{i-1}\Kernel_{r_j}\right) \odot \Kernel_{r_i'}\odot S^{\odot (d-i)}\right)} \\
&=& \tr \left( M_{k-1}\,\odot\left(\odot_{j=1}^{i-1}\Kernel_{r_j}\right) \odot S^{\odot (d-i+1)}\right)
\end{eqnarray*}
\if0
\begin{eqnarray*}
\lefteqn{\sum_{r_d'=1}^{r} \tr ( M_{k-1}\, \Kernel_{r_1}\odot\ldots\odot \Kernel_{r_{d-1}}\odot \Kernel_{r_d'})} \\
& =& \tr \bigl( M_{k-1}\,\bigl(\sum_{r_1'=1}^r \Kernel_{r_1'}\bigr) \odot S^{\odot (d-1)}\bigr) \\
& = & \tr ( M_{k-1}\,S \odot S^{\odot (d-1)}) =  \tr ( M_{k-1}\,S^{\odot d}).
\end{eqnarray*}
Similarly,
\begin{eqnarray*}
\lefteqn{ \sum_{r_j'=1}^r \tr ( M_{k-1}\,\Kernel_{r_1} \odot \ldots \odot \Kernel_{r_{j-1}} \odot \Kernel_{r_j'} \odot S^{\odot (d-j)}) } \\
&=& \tr( M_{k-1}\,\Kernel_{r_1} \odot \ldots \odot \Kernel_{r_{j-1}} S^{\odot (d-j+1)} )\,.
\end{eqnarray*}
\fi
Thus, by telescoping,
\begin{eqnarray*}
 \lefteqn{\Prob{ \hd = d, R_{1:d} = r_{1:d}   | \F_{k-1} } }\\
&=&
\frac{ \rho_d^{-2} \tr ( M_{k-1}\, \Kernel_{r_1}\odot\ldots \odot \Kernel_{r_{d-1}} \odot \Kernel_{r_d})}
 									{ \sum_{d'=0}^D \rho_{d'}^{-2} \tr( M_{k-1} S^{\odot d'})
									}.
\end{eqnarray*}
as desired.
An optimized implementation of drawing these random variables is shown as Algorithm~\ref{alg:poly_kernel_sampling}. 
 The algorithm is written to return the multi-index $\MI$.

\if0
\subsection{Learning with Gaussian kernels\todoar{Someone please read this section. I think it is going to need a lot of rewriting.}}
\label{sec:gaussian}
\newcommand{\sig}{{\sigma}}

In this section we consider parameterized kernel sets on a bounded interval that include uncountably many kernels. 
In this case we consider index set of the form
\[
	\I = \left\{ \sigma \, : \, \sigma \in \Sigma \right\},
\]
where $\Sigma$ is a bounded interval. One prominent example of relevant kernels is Gaussian kernels. We consider two categories of Gaussian kernels based on the dimensionality of kernel parameters. In the first category each kernel is parameterized by a single positive real number: $\kernel_\sigma(x,x') = \exp(-\|x-x'\|^2/\sigma^2)$, $\sigma>0$. In the second category kernels are parameterized by a diagonal covariance matrix, that is, one parameter per input variable: $\kernel_\sigma(x,x') = \exp(-\sum_{i=1}^r (x^{(i)}-x'^{(i)})^2/\sigma_i^2)$, $\sigma_i>0, \, i\in\{1,\ldots,r\}$. 

The traditional approach of using Gaussian kernels with multiple kernel learning algorithms is to select a predetermined number of base kernels from a bounded interval by discretizing the interval (see, e.g., \citet{rakotomamonjy2008simplemkl,kloft2011lp}). There are inherent flaws with the discretization of such kernel spaces. One problem is that many kernels will be left out from the kernel set, and if some of these kernels are required to obtain high accuracy the resulting predictor might perform poorly. Another flaw of this approach is that while it may be computationally feasible for one-dimensional parameters, it is not applicable to multi-dimensional parameters, as the discretization of such spaces leads to an exponentially large number of base kernels. 

A computationally feasible approach for continuously parameterized kernels was first introduced by \citet{argyriou2005learning}. They propose a coordinate descent approach that greedily selects a kernel parameter at each iteration. While their method is shown to be consistent \citep[Appendix B]{Arg07} and achieved good experimental results, no analysis is provide on the rate of convergence. Also, their proposed algorithm may suffer from local minima. They later proposed a DC-programming approach for the same problem to address the local minima issue \citep{argyriou2006dc}. The new approach was to some extent faster in one-dimensional kernel parameter search, however for multi-dimensional parameter search it was not a computationally efficient method. \citet{gehler2008infinite} proposed a similar algorithm for learning to combine kernels from continuously parameterized sets. Their proposed method was very similar to that of \citet{argyriou2005learning}, except that they perform a totally-corrective scheme at the end of each iteration, that is, all kernel weights are recomputed at each iteration. On the theoretical side, no convergence guarantees are provided for this method.

In this section we describe how our randomized mirror descent algorithm can be applied to continuously parameterized family of kernels.
Similar to~\eqref{eq:gradpoly} the coordinates of gradient vector are given by
\beqa
\label{eq:gradgaussian}
	g_{k,\sigma} = -\rho_{\sigma}^{-2} \tr (M_{k-1} \, \Kernel_\sigma)\,, 
\eeqa 
where for simplicity of notation $\Kernel_{\kernel_\sigma}$ is denoted by $\Kernel_\sigma$.
The plan to sample from $q_{k-1,\cdot} = \frac{1}{\S}|g_{k,\cdot}|$, where $\S = \int_{\sigma' \in \Sigma} |g_{k,\sigma'}| \d\sigma'$ is a normalization factor. For brevity we assume that $\rho_\sigma = 1, \, \sigma\in\Sigma$, however more general cases could be easily addressed too.

It may not be easy to construct the density function $q_{k-1,\cdot}$. Therefore, in this case, we resort to sampling methods. In particular, we use importance sampling and propose to sample from an auxiliary density function over $\Sigma$. 
Let $\sig_k$ denote the kernel parameter sampled at iteration $k$ from an auxiliary density function $s_{k-1,\cdot}$. Similar to~\eqref{eq:hg_impsamp} the estimate of gradient is constructed by 
\beqa
\label{eq:hg_gaussian}
	\hg_{k,\sigma} = \frac{\delta(\sigma-\sig_k)}{s_{k-1,\sig_k}} \tr (M_{k-1} \Kernel_{\sig_k}), \quad \sigma \in \Sigma, \quad \mbox{where } \sig_k \sim s_{k-1,\cdot},
\eeqa
where $\delta(\cdot)$ is the Dirac delta function. 
The extra quantity that is required to construct an estimate of gradient is $s_{k-1,\sig_k}$.

\paragraph{Gaussian kernels with one-dimensional parameter.} We propose the following sampling procedure when the kernel parameters are one-dimensional real numbers. We begin by solving the following optimization problem:
\beqa
	\sigma^* = \argmax_{\sigma\in\Sigma} \,\, \tr(M_{k-1} \Kernel_\sigma). \nonumber
\eeqa
This optimization problem has been considered by previous approaches that deal with continuously parameterized kernels \citep{argyriou2005learning, gehler2008infinite}. Once $\sigma^*$ is determined, we use a truncated normal density function as the sampling distribution. \todoc{Why do we care about $\sigma^*$? Explain the intuition.} The truncated normal distribution is limited to $\Sigma$ and has mean set to $\sigma^*$ and an arbitrary variance.\footnote{In the experiments we set the variance to $\sigma^*$ too.} \todoar{arbitrary variance?!} \todoc{Why is this a footnote? Any variance works?} 
Note that the properties of truncated normal distribution along with the boundedness of $\Sigma$ satisfy the conditions of importance sampling described in Section~\ref{sec:impsampest}. \todoc{Sure, but can the said ratio become arbitrarily large? What does this depend on? Can we bound the ratio as a function of the data, that is as a function of the magnitudes of quantities that define the learning problem?}  The quantity $s_{k-1,\sig_k}$, required to construct an estimate of gradient, could be easily computed for one-dimensional truncated normal distribution. Let $\Phi(\cdot)$ denote the cumulative distribution function of the standard normal distribution. The density function of the truncated normal distribution with mean $\mu$ and variance $\sigma^2$ in interval $[l,u]$ is given by
\beqa
	f(x;\mu,\sigma^2,l,u) = \frac{\frac{1}{\sigma\sqrt{2\pi}} \exp\left( -\frac{(x-\mu)^2}{2\sigma^2} \right)}{ \Phi\left( \frac{u-\mu}{\sigma} \right) - \Phi\left( \frac{l-\mu}{\sigma} \right) }. \nonumber
\eeqa

\paragraph{Gaussian kernels with multi-dimensional parameter.} 
While infinite kernel learning algorithms of \citet{argyriou2005learning, gehler2008infinite} perform an expensive multi-dimensional parameter search at each iteration, we perform a faster parameter selection mechanism by sampling from one-dimensional density functions. Note that the multi-dimensional Gaussian kernels we consider can be expressed as $\kernel_\sigma(x,x') = \Pi_{i=1}^r \kernel_{\sigma^{(i)}}(x,x')$, $\sigma = (\sigma^{(i)})_{i=1}^r$ is the $r$-dimensional kernel parameter and $\kernel_{\sigma^{(i)}}$'s are one-dimensional Gaussian kernels. 
Our goal is to sample one-dimensional kernel parameters $\sigma^{(1)},\ldots,\sigma^{(r)}$. To sample each parameter $\sigma^{(i)}$ we employ the sampling procedure described above with truncated normal distributions. We propose to sample one-dimensional parameters independently. Therefore the quantity $s_{k-1,\cdot}$ could be easily computed by multiplying the corresponding quantities obtained for each coordinate. \todoc{Again, the question is whether the critical ratio blows up. In this case, as a function of the dimension $D$. Any bounds?}
\fi

\section{Experiments}
\label{sec:experiments}
In this section we apply our method to the problem of multiple kernel learning in regression with the squared loss:
$L(w) = \frac12 \sum_{t=1}^n (f_w(x_t) -y_t)^2$, where $(x_t,y_t)\in \real^r\times \real$ are the input-output pairs in the data. 
In these experiments our aim is to learn polynomial kernels (cf. Section~\ref{sec:poly}).

\if0
 In the second  set of experiments we demonstrate how our algorithm can deal with continuously parametrized, uncountable kernel spaces, to learn linear combinations of Gaussian kernels. We consider both the case when a constant multiple of the identity matrix is used as the  covariance matrix (i.e., in this case $\I$ is an interval), or when the covariance matrix is diagonal (in this case $\I$ is a rectangle in the $d$-dimensional space). In all experiments a key issue will be how to implement the sampling step of the algorithm in an  effective and efficient way.
\fi

We compare our method against several kernel learning algorithms from the literature on synthetic and real data. 
In all experiments we report mean squared error over test sets. A constant feature is added to act as offset, and the inputs and output are normalized to have zero mean and unit variance. Each experiment is performed with $10$ runs in which we randomly choose training, validation, and test sets. The results are averaged over these runs. 



\if0
We use the following setup for our method: We use $1/(1+\sqrt{t}/10)$ as the learning rate. The maximum number of iterations was set to $500$. The stopping condition is that the objective function (divided by its initial value) does not change by more than $10^{-8}$. After stopping, we averaged the kernel weights over all iterations. 
\fi

\subsection{Convergence speed\todoar{is this title good? if not, change it.}}

In this experiment we examine the speed of convergence of our method and compare it against one of the fastest standard multiple kernel learning algorithms, that is, the $p$-norm multiple kernel learning algorithm of \citet{kloft2011lp} with $p=2$,\footnote{Note that $p=2$ in \citet{kloft2011lp} notation corresponds to $p=4/3$ or $\nu=2$ in our notation, which gives the same objective function that we minimize with Algorithm~\ref{alg:grad}.} and the uniform coordinate descent algorithm that updates one coordinate per iteration uniformly at random \citep{Nes10,Nes12, ShTe11,RichTa11}. We aim to learn polynomial kernels of up to degree $3$ with all algorithms. Our method uses Algorithm \ref{alg:poly_kernel_sampling} for sampling with $D=3$. The set of provided base kernels is the linear kernels built from input variables, that is, $\kernel_{(i)}(x,x')=x_{(i)} x'_{(i)}$, where $x_{(i)}$ denotes the $i^{\rm th}$ input variable. For the other two algorithms the kernel set consists of product kernels from monomial terms for $D\in\{0,1,2,3\}$ built from $r$ base kernels, where $r$ is the number of input variables. The number of distinct product kernels is $\binom{r+D}{D}$. In this experiment for all algorithms we use ridge regression with its regularization parameter set to $10^{-5}$. Experiments with other values of the regularization parameter achieved similar results.
 
We compare these methods in four datasets from the UCI machine learning repository \citep{frank2010uci} and the Delve datasets\footnote{See, \url{www.cs.toronto.edu/~delve/data/datasets.html}}. The specifications of these datasets are shown in Table~\ref{tab:dataset_specs}. 
\begin{table*}[htbp]
\caption{Specifications of datasets used in experiments.}
\label{tab:dataset_specs}
\begin{center}
\begin{small}
\begin{tabular}{|l|l|l|l|l|l|}
  \hline
	\textbf{Dataset} & \textbf{\# of variables} & \textbf{Training size} & \textbf{Validation size} & \textbf{Test size} \\ \hline
german & $20$ & $350$ & $150$ & $500$ \\ \hline
ionosphere & $34$ & $140$ & $36$ & $175$ \\ \hline
ringnorm & $20$ & $500$ & $1000$ & $2000$ \\ \hline
sonar & $60$ & $83$ & $21$ & $104$ \\ \hline
splice & $60$ & $500$ & $1000$ & $1491$ \\ \hline
waveform & $21$ & $500$ & $1000$ & $2000$ \\ \hline
\end{tabular}
\end{small}
\end{center}
\end{table*}
We run all algorithms for a fixed amount of time and measure the value of the objective function~\eqref{eq:penrisk}, that is, the sum of the empirical loss and the regularization term. Figure~\ref{fig:convergence_test} shows the performance of these algorithms. In this figure \Stoch represents our algorithms, \Kloft represents the algorithm of \citet{kloft2011lp}, and \UCD represents the uniform coordinate descent algorithm. 
\begin{figure*}[tb]
\begin{center}
\centerline{\includegraphics[width=1.2\linewidth]{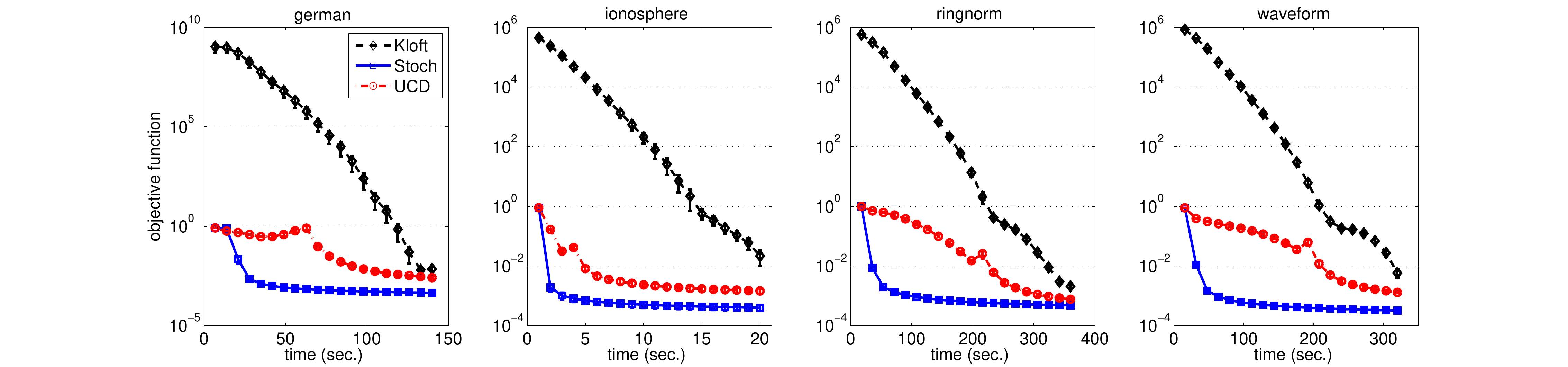}}
\caption{Convergence comparison of our method and other algorithms.}
\label{fig:convergence_test}
\end{center}
\vskip -0.3in
\end{figure*}
The results show that our method consistently outperforms the other algorithms in convergence speed. Note that our stochastic method updates one kernel coefficient per iteration, while \Kloft updates $\binom{r+D}{D}$ kernel coefficients per iteration. The difference between the two methods is analogous to the difference between stochastic gradient vs. full gradient algorithms. While \UCD also updates one kernel coefficient per iteration its naive method of selecting coordinates results in a slower overall convergence compared to our algorithm. In the next section we compare our algorithm against several representative methods from the MKL literature.

\subsection{Synthetic data}

In this experiment we examine the effect of the size of the kernel space on prediction accuracy and training time of MKL algorithms. We generated data for a regression problem. Let $r$ denote the number of dimensions of the input space. The inputs are chosen uniformly at random from $[-1,1]^r$. The output of each instance is the uniform combination of $10$ monomial terms of degree $3$ or less. These terms are chosen uniformly at random among all possible terms. The outputs are noise free. We generated data for $r\in\{5,10,20,\ldots,100\}$, with $500$ training and $1000$ test points. The regularization parameter of the ridge regression algorithm was tuned from $\{10^{-8},\ldots,10^2\}$ using a separate validation set with $1000$ data points.

We compare our method (\Stoch) against the algorithm of \citet{kloft2011lp} (\Kloft), the nonlinear kernel learning method of \citet{cortes2009learning} (\Cortes), and the hierarchical kernel learning algorithm of \citet{bach2008exploring} (\Bach).%
\footnote{While several fast MKL algorithms are available
in the literature, such as those of \citet{sonnenburg2006large,rakotomamonjy2008simplemkl,Xu10-MKLLasso,orabona11-sparseMKL,kloft2011lp}, 
a comparison of the reported experimental results shows that from among these algorithms the method of \citet{kloft2011lp} has the best performance overall.
Hence, we decided to compare against only this algorithm. Also note that the memory and computational cost of all these methods still scale linearly with the number of kernels, making them unsuitable for the case we are most interested in. Furthermore, to keep the focus of the paper we compare our algorithm to methods with sound theoretical guarantees. As such, it remains for future work to compare with other methods, such as the infinite kernel learning of \citet{gehler2008infinite}, which lack such guarantees but exhibit promising performance in practice.
}
 The set of base kernels consists of $r$ linear kernels built from the input variables. Recall that the method of \citet{cortes2009learning} only considers kernels of the form $\kernel_{\theta} = (\sum_{i=1}^r \theta_i \kernel_i)^D$, where $D$ is a predetermined integer that specifies the degree of nonlinear kernel. Note that adding a constant feature is equivalent to adding polynomial kernels of degree less than $D$ to the combination too. We provide all possible product kernels of degree $0$ to $D$ to the kernel learning method of \citet{kloft2011lp}. For our method and the method of \citet{bach2008exploring} we set the maximum kernel degree to $D=3$.

\begin{figure*}[!tb]
\begin{center}
\centerline{\includegraphics[width=1\linewidth]{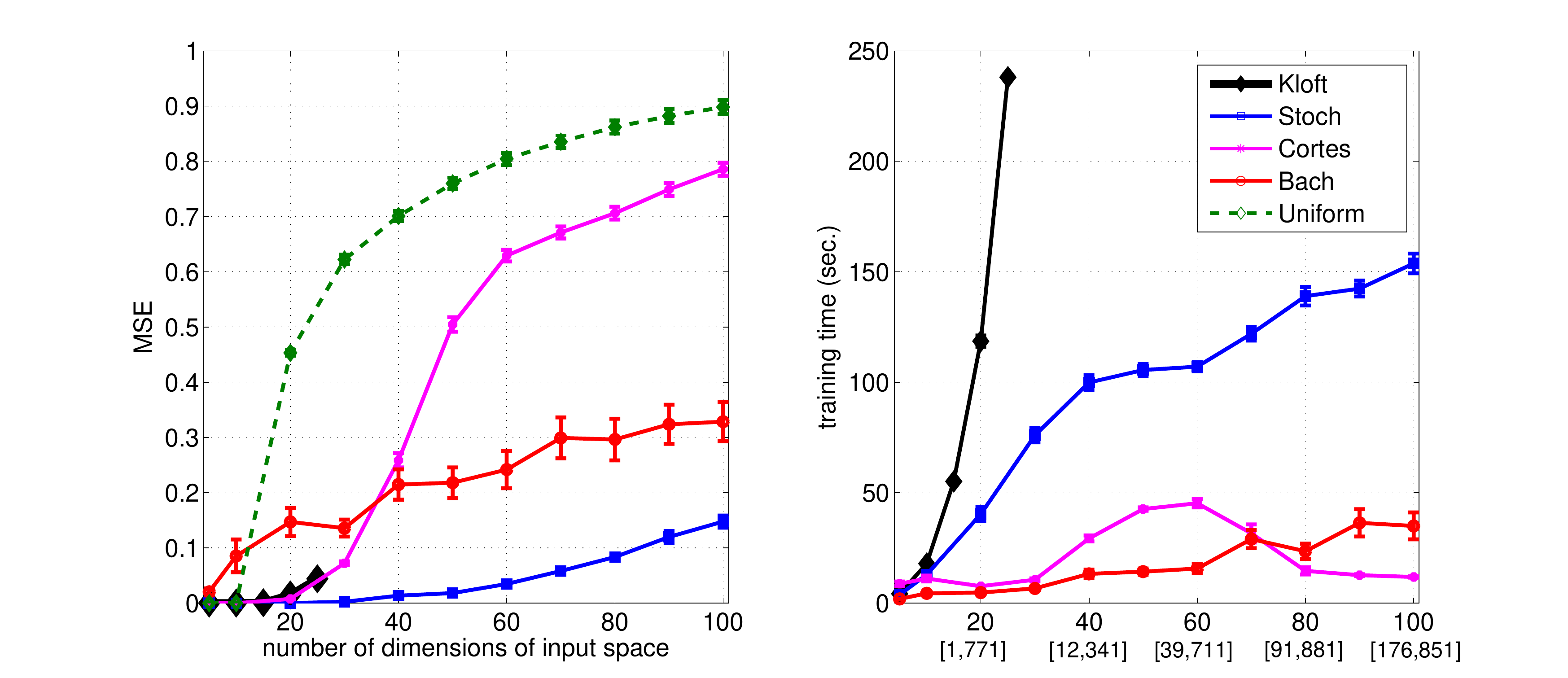}}
\caption{Comparison of kernel learning methods in terms of test error (left) and training time (right).}
\label{fig:scalability_test}
\end{center}
\vskip -0.3in
\end{figure*}

The results are shown in Figure~\ref{fig:scalability_test}, the mean squared errors are on the left plot, while the training times are on the right plot. In the training-time plot the numbers inside brackets indicate the total number of distinct product kernels for each value of $r$. This is the number of kernels fed to the \Kloft algorithm. Since this method deals with a large number of kernels, it was possible to precompute and keep the kernels in memory ($8$GB) for $r \leq 25$. Therefore, we ran this algorithm for $r \leq 25$. 
For $r > 25$, we could use on-the-fly implementation of this algorithm, however that further increases the training time. Note that the computational cost of this method depends linearly on the number of kernels, which in this experiment, is cubic in the number of input variables since $D=3$. While the standard MKL algorithms, such as \Kloft, cannot handle such large kernel spaces, in terms of time and space complexity, the other three algorithms can efficiently learn kernel combinations. However their predictive accuracies are quite different. Note that the performance of the method of \citet{cortes2009learning} starts to degrade as $r$ increases. This is due to the restricted family of kernels that this method considers. The method of \citet{bach2008exploring}, which is well-suited to learn sparse combination of product kernels, performs better than \citet{cortes2009learning} for higher input dimensions. Among all methods, our method performs best in predictive accuracy while its computational cost is close to that of the other two competitors.

\subsection{Real data}
\label{sec:real_data_exp}

In this experiment we aim to compare several MKL methods in real datasets. We compare our new algorithm (\Stoch), the algorithm of \citet{bach2008exploring} (\Bach), and the algorithm of \citet{cortes2009learning} (\Cortes). For each algorithm we consider learning polynomial kernels of degree $2$ and $3$. We also include uniform combination of product kernels of degree $D$, i.e. $\kernel_D=\left( \sum_{i=1}^r \kernel_i \right)^D$, for $D\in\{1,2,3\}$ (\Uniform). To find out if considering higher-order interaction of input variables results in improved performance we also included a MKL algorithm to which we only feed linear kernels ($D=1$). We use the MKL algorithm of \citet{kloft2011lp} with $p\in\{1,2\}$ (\Kloft).

We compare these methods on six datasets from the UCI machine learning repository and Delve datasets. In these datasets the number of dimensions of the input space is $20$ and above.
\if0
\begin{itemize}
	\item \Stoch: our new algorithm, with $D\in\{2,3\}$.
	\item \Bach: method of \citet{bach2008exploring}, with $D\in\{2,3\}$.
	\item \Cortes: method of \citet{cortes2009learning}, with $D\in\{2,3\}$.
	\item \Kloft: standard $\ell_p$-norm MKL, with $p\in\{1,2\}$. 
	Base kernels are linear kernels. Implementation is based on \citet{kloft2011lp}. \todoc{Why is the set of kernels to combine restricted?}
	\item \Uniform: uniform combination of all kernels up to degree $D$, i.e. $\kernel=\left( \sum_i \kernel_i \right)^D$, for $D\in\{1,2,3\}$.
\end{itemize}
\fi
The specifications of these datasets are shown in Table~\ref{tab:dataset_specs}. The regularization parameter is selected from the set $\{10^{-4},\ldots,10^3\}$ for all methods using a validation set. The results are shown in Figure~\ref{fig:mldata}. 
 
Overall, we observe that methods that consider non-linear variable interactions (\Stoch, \Bach, and \Cortes) perform better than linear methods (\Kloft). Among non-linear methods, \Cortes performs worse than the other two. We believe that this is due to the restricted kernel space considered by this method. The performance of \Stoch and \Bach methods is similar overall.

We observe that our method overfits when it considers kernels of degree $3$. However, one can easily prevent overfitting by assigning larger $\rho$ values to higher-degree kernels such that the stochastic algorithm selects lower-degree kernels more often. For this purpose, we repeat this experiment for $D = 3$ with a modified set of $\rho$ values, where we use $\rho_d^2 = 1$ for kernels of degree $2$ or less and $\rho_d^2 = 4$ for kernels of degree $3$. With the new $\rho$ coefficients we observe an improvement in algorithm's performance. \todoc{To some extent and ``can easily dealt with'' contradict. How about using $\rho_i$ that would be respected by a sampling density!??} See \Stoch ($D=3$, prior) error values in Figure~\ref{fig:mldata}. 
\begin{figure*}[tb]
\begin{center}
\centerline{\includegraphics[width=1\linewidth]{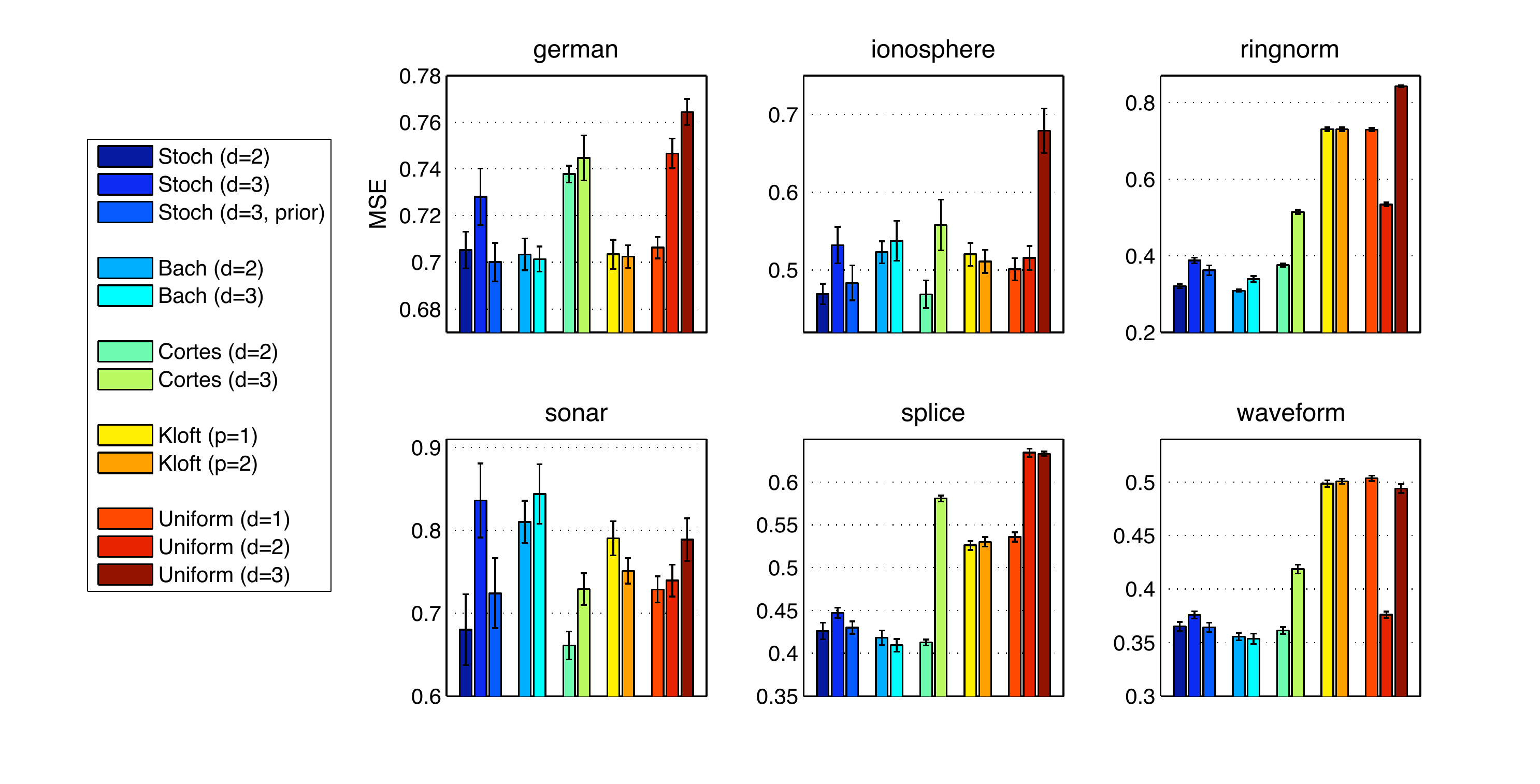}}
\vspace{-0.2cm}
\caption{Prediction error of different methods in the real data experiment}
\label{fig:mldata}
\end{center}
\vskip -0.3in
\end{figure*}


\if0 
\subsection{Updating Kernel Weights}

Currently, the coefficient of the kernel selected at iteration $t$ of the algorithm is computed by $\eta_t \frac{P_t(\sigma)}{\widehat{P}_t(\sigma)}$, where $\eta_t$ is the learning rate. After that all coefficients are normalized so that the $\|\theta\|_p = 1$. The problem with this scheme is that the kernels added during the early iteration will have less weights compared to those added in the late iterations. Here is an example: Suppose the coefficients of the first three kernels are $\{1, 0.9, 0.8\}$ respectively. Suppose we use $\ell_1$ constraint. At iteration $3$, the coefficients of the first two kernels are $0.53$, and $0.47$ respectively. When we add the third kernel, the coefficients are reweighed to $0.29$, $0.26$, and $0.45$. This problem can be resolved by keeping the original weights and scaling the coefficients only when we need to use the kernel combination, for example when we want to compute the gradient. 

Let us discuss the algorithm that Csaba suggested last time for updating kernel weight. The suggested method is supposed the help make the effect of $\ell_1$ penalty less harsh. First we change the algorithm by adding the $\ell_1$ penalty term to the objective function instead of constraining the weight vector. This results in a negative constant term in the gradient. This means that at each iteration, after adding a kernel to the combination, we reduce all kernel coefficients by a constant. If a kernel coefficients becomes negative we set it back to zero. Csaba then suggested that instead of changing the coefficients, we keep these values, and apply the shrinkage operator only when we need to use the kernel combination, for example when we want to compute the gradient. The update is then applied to the original weight vector, and not the shrunken values. Assuming that at each iteration we add a kernel that has not been in the combination, I claim that the new modification has not effect on the outcome of the algorithm. Note that with the above assumption at each iteration the algorithm updates only one coordinate. However, this may not be true, so first make sure that at iteration $t$ what should the value of the constant (decrease) be. After that make your claim.

We must find a better way to update the kernel weights. The current version of our algorithm uses $\eta_t \frac{P_t(\sigma)}{\widehat{P}_t(\sigma)}$ as kernel weights. This is not optimal for a finite number of iterations. Perhaps we should run the finite kernel learning algorithm at the end of our algorithm to find the optimal kernel weights. Any other suggestions? How about the new projection operator that slightly corrects the kernel weights?

\subsection{Finite MKL with weighted regularization}
We aim to apply a weighted regularization scheme, with higher penalty for higher-order terms to enforce the algorithm to pick lower-order polynomial kernels. In the first experiment I applied the finite MKL with weighted regularization to our synthetic dataset, with $10$ features. In this dataset the output is sum of $10$ polynomial terms, $7$ of them with degree $3$ and the rest with degree $2$.

In the original version of the algorithm the resulting kernel coefficient vector has 23 elements with a value of at least $10^{-7}$, from which only $9$ have a value of at least $10^{-3}$. 

In the weighted regularization scheme, the regularization term is $\sum_m \nu_m \theta_m$, where $\nu_m = 10000^{D_m}$ is the regularization coefficient of the $m^{\rm th}$ kernel and $D_m\in\{0,1,2,3\}$ is the polynomial degree of the corresponding kernel.  With the new regularization scheme the optimal kernel combination found by the algorithm has a high weight on the first kernel (constant kernel, $k(X,X')=1$). In this case $64$ kernels have weights greater than $10^{-7}$. Among these kernels only $3$ have a weight of at least $10^{-4}$ and $11$ have a weight of at least $10^{-5}$. With the weighted regularization the test error of the algorithm is still roughly zero ($10^{-4}$).

However, when we increase the regularization parameter ($\lambda$), we see a difference in performance for different values of $\nu$. In a similar experiment on the synthetic dataset, with $r=10$, we used $\lambda=1$ and we achieved the mean squared error of $0.01$, $0.04$, and $0.40$ for $\nu_m = 1^{d_m}$, $\nu_m = 100^{d_m}$, and $\nu_m = 1000^{d_m}$ respectively. 

We ran a similar experiment with three of the UCI datasets, German, Sonar, and Ionosphere. In this experiment we set $\nu_m = 1000^{d_m}$. With the weighted regularization we observed a significant improvement in the test error compared to the unweighted regularization in the german and sonar datasets. However, the improvement in the ionosphere dataset was only slight. 

We conclude that the weighted regularization scheme can force the algorithm to select specific kernels. This can be used to ensure that the algorithm picks lower degree terms when choosing higher degree terms leads to overfitting.
\fi

\if0
\subsection{Gaussian kernels -- real data }

We run our method with both single, and multi-dimensional kernel parameters (\textbf{Stoch (1D)} and \textbf{Stoch (nD)}), along with the infinite kernel learning methods proposed in \citet{argyriou2005learning} (\textbf{Arg (1D)} and \textbf{Arg (nD)}), and \citet{gehler2008infinite} (\textbf{Gehler (1D)} and \textbf{Gehler (nD)}). The method of \citet{argyriou2005learning} has been improved in \citet{argyriou2006dc} by applying DC-programming for parameter search. However this method is only applicable when the number of coordinates is small. In this experiment we implemented the original version proposed in \citet{argyriou2005learning} in order to be able to run the method for multi-dimensional parameter search. We restrict the parameter search to interval $\Sigma = [10^{-4},10^{4}]$. We use Matlab's \textsf{fmincon} function to solve the parameter search sub-problem for the infinite kernel learning methods. We also run finite kernel learning (\textbf{Finite (mkl)}), based on the algorithm of \citet{kloft2011lp} with $50$ kernels selected by discretization of the above interval in a geometric fashion, where the $i^{\rm th}$ kernel parameter is equal to $10^{-4} 1.6^{i-1}$. Finally, we evaluate the performance of the uniform combination of these $50$ kernels (\textbf{Finite (unif)}).

In this experiment we consider $12$ datasets. The specifications of the datasets are given in Table~\ref{tab:dataset_specs}. 
The regularization parameter of the ridge regression algorithm is tuned from the set $\{10^{-4},\ldots,10^3\}$ using a validation set. The stopping criterion for all of the infinite kernel learning methods is that the change in the objective function value in two consequent iterations is less than $1\%$ of its current value. We found that no further improvement is made by using lower stopping thresholds. The results are shown in Figure~\ref{fig:mse_gaussian_uci}. The results suggest that in most datasets it is better to search for one-dimensional kernel parameter. However, there are cases, such as image and splice datasets, in which multi-dimensional kernel parameter search achieves better performance. In general the performance of our method is comparable to that of other infinite kernel learning methods, as well as the Finite (mkl) algorithm. The median rank of all methods over all datasets, shown in Table~\ref{tab:uci_gaussian_med_rank}, also confirms this.
\begin{figure*}[tb]
\begin{center}
\centerline{\includegraphics[width=1\linewidth]{uci_gaussian_L2.pdf}}
\caption{Prediction error of different methods in the Gaussian kernel experiment}
\label{fig:mse_gaussian_uci}
\end{center}
\end{figure*}

\begin{table*}[tb]
\caption{Median rank of all methods in the Gaussian kernel experiment.}
\label{tab:uci_gaussian_med_rank}
\begin{center}
\begin{tabular}{|c|c|c|c|c|c|c|c|}
  \hline
\begin{sideways}Stoch (1D)\end{sideways} & \begin{sideways}Stoch (nD)\end{sideways} & \begin{sideways}Arg (1D)\end{sideways} & \begin{sideways}Arg (nD)\end{sideways} & \begin{sideways}Gehler (1D)\end{sideways} & \begin{sideways}Gehler (nD)\end{sideways} & \begin{sideways}Finite (mkl)\end{sideways} & \begin{sideways}Finite (unif)\end{sideways} \\ \hline
$1$ & $6.5$ & $4$ & $6$ & $4$ & $5$ & $4$ & $7$ \\ \hline
\end{tabular}
\end{center}
\end{table*}

\paragraph{Training Time Comparison} We compared the training time of different methods in this experiment. Table \ref{tab:uci_gaussian_training_time} shows the training time for each algorithm. The general trend over all datasets is that the Finite (unif) method is fastest (almost zero training time), since it does not perform any kernel learning. Generally, one-dimensional methods perform faster than their multi-dimensional counterparts. Among one-dimensional methods, our method and the method of \citet{argyriou2005learning} are comparable, and are both faster than the method of \citet{gehler2008infinite}. Recall that \citet{gehler2008infinite} recompute the entire kernel weight vector at each iteration. For datasets with relatively high number of dimensions ($20$ or more) the Finite (mkl) method performs faster than multi-dimensional methods with the given kernel set, but is slower than one-dimensional methods. Among multi-dimensional infinite kernel learning methods, our method performs considerably faster than the other methods (approximately $10$ times faster in a dataset with $20$ input variables). This is due to the specific sampling density we chose for this experiment. Finally, the multi-dimensional version of \citet{argyriou2005learning} and \citet{gehler2008infinite} are slowest, since they perform a multi-dimensional kernel parameter search at each iteration.
\begin{table}[tb]
\caption{Training time (seconds) of different algorithms in the Gaussian kernel experiment.}
\label{tab:uci_gaussian_training_time}
\begin{center}
\begin{small}
\begin{tabular}{|l|r|r|r|r|r|r|r|r|}
  \hline
 & \begin{sideways}Stoch (1D)\end{sideways} & \begin{sideways}Stoch (nD)\end{sideways} & \begin{sideways}Arg (1D)\end{sideways} & \begin{sideways}Arg (nD)\end{sideways} & \begin{sideways}Gehler (1D)\end{sideways} & \begin{sideways}Gehler (nD)\end{sideways} & \begin{sideways}Finite (mkl)\end{sideways} & \begin{sideways}Finite (unif)\end{sideways} \\ \hline
banana        & 4.01 & 3.29   & 1.38 & 4.77    & 11.37 & 27.09   & 88.60  & 0.00 \\ \hline
breast cancer & 0.61 & 5.94   & 0.27 & 5.54    & 0.65  & 5.87    & 1.66   & 0.00 \\ \hline
diabetes      & 1.21 & 15.37  & 1.27 & 21.86   & 6.55  & 55.71   & 15.36  & 0.00 \\ \hline
german        & 0.94 & 24.05  & 1.14 & 162.99  & 7.03  & 199.18  & 30.23  & 0.00 \\ \hline
heart         & 0.55 & 6.61   & 0.27 & 15.62   & 0.53  & 20.74   & 1.51   & 0.00 \\ \hline
image seg.    & 1.65 & 33.24  & 2.17 & 515.80  & 28.78 & 600.88  & 114.48 & 0.00 \\ \hline
ionosphere    & 0.78 & 21.82  & 0.34 & 177.16  & 1.69  & 180.66  & 5.68   & 0.00 \\ \hline
ringnorm      & 3.53 & 59.97  & 3.54 & 672.46  & 23.61 & 804.37  & 134.95 & 0.00 \\ \hline
sonar         & 0.83 & 56.83  & 0.22 & 56.71   & 0.75  & 56.25   & 1.53   & 0.00 \\ \hline
splice        & 2.38 & 186.34 & 2.48 & 1523.53 & 40.20 & 1323.08 & 96.52  & 0.00 \\ \hline
thyroid       & 0.33 & 2.37   & 0.24 & 3.19    & 0.85  & 5.13    & 3.07   & 0.00 \\ \hline
waveform      & 2.00 & 43.30  & 2.87 & 254.29  & 19.22 & 302.40  & 71.21  & 0.00 \\ \hline
\end{tabular}
\end{small}
\end{center}
\end{table}
\fi

\section{Conclusion}
\label{sec:conclusion}
We introduced a new method for learning a predictor by combining exponentially many linear predictors using a randomized mirror descent algorithm.  We derived finite-time performance bounds that show that the method efficiently optimizes our proposed criterion.
Our proposed method is a variant of a randomized stochastic coordinate descent algorithm, where the main trick is the careful construction of an unbiased randomized estimate of the gradient vector that keeps the variance 
 of the method under control,
 and can be computed efficiently when the base kernels have a certain special combinatorial structure. 
The efficiency of  our method was demonstrated for the practically important problem of learning polynomial kernels on a variety of synthetic and real datasets comparing to a representative set of algorithms from the literature. 
For this case, our method is able to compute an optimal solution in polynomial time as a function of the \emph{logarithm} of the number of base kernels.
To our knowledge, ours is the first method for learning kernel combinations that achieve such an exponential reduction in complexity while satisfying strong performance guarantees, thus opening up the way to apply it to extremely large number of kernels.
Furthermore, we believe that our method is applicable beyond the case studied in detail in our paper.
For example, the method seems extendible to the case when infinitely many kernels are combined, such as the case of learning a combination of Gaussian kernels.
However, the investigation of this important problem remains subject to future work.

\subsubsection*{Acknowledgements}
This work was supported by Alberta Innovates Technology Futures and NSERC.





\appendix

\section{Proofs}
\label{sec:proofs}

In this section we present the proofs of Theorem~\ref{thm:proximal} and Proposition~\ref{prop:wstar}. The proof of Theorem~\ref{thm:proximal} is based on the standard proof of the convergence rate of the proximal point algorithm, see, for example, \citep{beck2003mirror}, or the proof of Proposition~2.2 of \citet{nemirovski2009robust}, which carry over the same argument to solve very similar but less general problems. We also provide some improvements and simplifications at the end. 
Before giving the actual proof, we need the following standard lemma:

\begin{restatable}[Lemma 2.1 of \citealt{nemirovski2009robust}]{lemma}{egylemma}
\label{lem:nemirovski}
Assume that $\Psi$ is $\alpha$-strongly convex  with respect to some norm $\|\cdot\|$ (i.e., \eqref{eq:psi-strong} holds).
Let $\t_1 \in K \cap A^\circ$, $\t \in K\cap A$, and $g \in \R^d$. Define
$\t_2 = \argmin_{\t' \in K\cap A} \left\{ \langle g, \t' \rangle + D_{\Psi}(\t',\t_1)\right\}$. Then
\[
\ip{g, \t_1 - \t}  \le D_\Psi(\t,\t_1) - D_\Psi(\t,\t_2) + \frac{\| g \|_*^2}{2\alpha}.
\]
\end{restatable}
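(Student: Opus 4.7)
The plan is to follow the classical three-point-identity argument for Bregman projections, combined with strong convexity and Young's inequality to convert a linear term in $g$ into a squared dual norm.

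First, I would write $\ip{g,\t_1-\t} = \ip{g,\t_1-\t_2} + \ip{g,\t_2-\t}$ and handle the two pieces separately. For the second piece, the first-order optimality condition for $\t_2$ (which lies in $K\cap A^\circ$ because $\Psi$ is a barrier on $A$, so the minimizer cannot escape to $\partial A$) gives $\ip{g + \nabla\Psi(\t_2) - \nabla\Psi(\t_1),\t-\t_2} \ge 0$ for every $\t\in K\cap A$. Rearranging,
\[
\ip{g,\t_2-\t} \le \ip{\nabla\Psi(\t_1) - \nabla\Psi(\t_2),\t_2-\t}.
\]
The right-hand side is exactly the quantity addressed by the three-point identity
\[
\ip{\nabla\Psi(\t_1) - \nabla\Psi(\t_2),\t_2-\t} = D_\Psi(\t,\t_1) - D_\Psi(\t,\t_2) - D_\Psi(\t_2,\t_1),
\]
which follows immediately from expanding all three Bregman divergences using the definition.

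For the first piece, I would use the generalized Cauchy--Schwarz inequality (duality of norms) and Young's inequality:
\[
\ip{g,\t_1-\t_2} \le \|g\|_* \|\t_1-\t_2\| \le \frac{\|g\|_*^2}{2\alpha} + \frac{\alpha}{2}\|\t_1-\t_2\|^2.
\]
The $\alpha$-strong convexity assumption on $\Psi$ yields $D_\Psi(\t_2,\t_1) \ge \tfrac{\alpha}{2}\|\t_1-\t_2\|^2$, so the extra $\tfrac{\alpha}{2}\|\t_1-\t_2\|^2$ term is absorbed by the $-D_\Psi(\t_2,\t_1)$ term from the three-point identity. Summing the two pieces then produces the claimed bound.

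I do not expect any real obstacle: the lemma is standard and all three ingredients (first-order optimality, three-point identity, Young's inequality with strong convexity) are routine. The only mild care needed is in justifying that $\t_2$ lies in $A^\circ$, so that $\nabla\Psi(\t_2)$ is defined and the first-order condition applies; this is where the Legendre (barrier) hypothesis on $\Psi$ is used.
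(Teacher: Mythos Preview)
Your proposal is correct and is essentially the same argument as the paper's: both use the barrier property to place $\t_2\in A^\circ$, invoke the first-order optimality condition at $\t_2$, apply the three-point (3-DIV) identity for Bregman divergences, and finish with Young's inequality together with $D_\Psi(\t_2,\t_1)\ge\tfrac{\alpha}{2}\|\t_1-\t_2\|^2$ from strong convexity. The only difference is cosmetic ordering---you split $\ip{g,\t_1-\t}$ first and then apply the identity, whereas the paper starts from the 3-DIV lemma and rearranges---but the ingredients and logic are identical.
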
 
\renewcommand{\div}{D_{\Psi}}
We provide an alternate proof that is based on the so-called $3$-DIV lemma. The $3$-DIV lemma \citep[e.g., Lemma 11.1,][]{CBLu06:book} allows one to express the sum of the divergences between the vectors $u,v$ and $v,w$ in terms of the divergence between $u$ and $w$ and an additional ``error term'', where $u\in A$, $v,w\in A^\circ$:
\[
\div(u,v) + \div(v,w) = \div(u,w) + \ip{ \nabla \psi(w) - \nabla \psi(v),u-v}\,.
\]
\begin{proof}
Note that $\t_2\in A^\circ$ due to behavior of $\Psi$ at the boundary of $A$. Thus, $\Psi$ is differentiable at $\t_2$ and 
\begin{align}\label{eq:divdiff}
\nabla_1 \div(\t_2,\t_1) = \nabla \psi(\t_2) - \nabla \psi(\t_1)\,,
\end{align}
where $\nabla_1$ denotes differentiation of $\div$ w.r.t. its first variable.
Let $f(\t') = \ip{g,\t'} + \div(\t',\t_1)$. By the optimality property of $\t_2$ and since $\t\in K\cap A$,  we  have
\[
\ip{ \nabla f(\t_2), \t_2 - \t } \le 0\,.
\]
Plugging in the definition of $f$ together with the identity~\eqref{eq:divdiff} gives 
\begin{align}\label{eq:opt2}
\ip{ g + \nabla \psi(\t_2) - \nabla \psi(\t_1), \t_2 - \t } \le 0\,.
\end{align}
Now, by the $3$-DIV Lemma, 
\begin{align*}
\div(\t,\t_2) + \div(\t_2,\t_1) 
 & = \div(\t,\t_1) + \ip{ \nabla \Psi(\t_1) - \nabla \Psi(\t_2), \t - \t_2 } \\
 & = \div(\t,\t_1) + \ip{ g + \nabla \Psi(\t_2) - \nabla \Psi(\t_1),\t_2-\t} + \ip{g, \t-\t_2}.
\end{align*}
Hence, by reordering and using the inequality~\eqref{eq:opt2} we get
\begin{align*}
\div(\t,\t_2) - \div(\t,\t_1) &\le \ip{g,\t-\t_2} - \div(\t_2,\t_1) \\
& = \ip{g,\t_1-\t_2} -  \div(\t_2,\t_1) + \ip{g,\t-\t_1} \\
& \le \frac{\|g\|_*^2}{2\alpha} + \ip{g,\t-\t_1}\,,
\end{align*}
where in the last line we used Young's inequality\footnote{Young's inequality states that for any $x,y$ vectors and $\alpha>0$, $\ip{x,y} \le \|x\|_* \|y\| \le \frac12 \left(\frac{\|x\|_*^2}{\alpha}+\alpha \|y\|^2\right)$.} and that due to the strong convexity of $\Psi$, $\div(\t_2,\t_1) \ge \frac{\alpha}{2} \| \t_2 - \t_1 \|^2$.
\end{proof}
\newcommand{\oeta}{\overline{\eta}^{(T)}}
\theoremPerfBound*
\newcommand{\gradk}{g_k}
\begin{proof}
Introduce the average learning rates $\oeta_{k} = \eta_k/{\sum_{k=1}^{T} \eta_{k-1}}$, $k=1,\ldots,T$, the averaged parameter estimates
\[
\bar{\t}^{(T-1)}=\sum_{k=1}^{T} \oeta_{k-1} \t^{(k-1)}
\] 
and choose some $\t^*\in K\cap A$. To prove the first part of the theorem, it suffices to show that the bound holds for $J(\bar{\t}^{(T-1)}) - J(\t^*)$. Define $\gradk=\nabla J \left (\t^{(k-1)}\right)$.
By the convexity of $J(\t)$, we have
\begin{eqnarray}
J\left(\bar{\t}^{(T-1)}\right) - J(\t^*)
&\le& \sum_{k=1}^{T} \oeta_{k-1}  \left(J\left(\t^{(k-1)}\right)-J(\t^*)\right) \nonumber \\
&\le& \sum_{k=1}^T \oeta_{k-1}  \ip{\gradk, \t^{(k-1)}-\t^* } \nonumber \\
&=& \sum_{k=1}^T \oeta_{k-1}  \ip{\hg_k, \t^{(k-1)}-\t^*} +
\sum_{k=1}^T \oeta_{k-1}  \ip{ \gradk - \hg_k, \t^{(k-1)}-\t^*}
\label{eq:thm-J}
\end{eqnarray}
Notice that the first term on the right hand side above is the sum of linearized losses appearing in the standard analysis of the proximal point algorithm with loss functions $\hg_k$ and learning rates $\oeta_{k-1}$, and the second sum contains the term that depends on how well $\hg_k$ estimates the gradient 
$\gradk$. Thus, in this way, it is separated how the proximal point algorithm and the gradient estimate effect the convergence rate of the algorithm.
The first sum can be bounded by invoking the standard bound for the proximal point algorithm (we will give the very short proof for completeness, based on  Lemma~\ref{lem:nemirovski}), while the second sum can be analyzed by noticing that, by assumption \eqref{eq:gradest}, its elements form an $\{\F_{k}\}$-adapted martingale-difference sequence.

To bound the first sum,
 first note that the conditions of Lemma~\ref{lem:nemirovski} are satisfied for $\t_1=\t^{(k-1)},\t=\t^*, g=\oeta_{k-1} \hg_k$,
since $\t_1\in K\cap A^\circ$ (as mentioned beforehand, this follows from the behavior of $\Psi$ at the boundary of $A$). Further, note that due to the so-called projection lemma (i.e., the $\div$-projection of the unconstrained optimizer is the same as the optimizer of the constrained optimization problem),\todo{A reference would be nice} %
we can conclude that $\t^{(k)}=\t_2$, where $\t_2$ is defined in Lemma~\ref{lem:nemirovski}.
Thus, Lemma~\ref{lem:nemirovski} gives
\[
\eta_{k-1}\ip{ \hg_k,\t^{(k-1)}-\t^*}
\le \div(\t^*,\t^{(k-1)}) - \div(\t^*,\t^{(k}) + \frac{\eta_{k-1}^2 \|\hg_k\|_*^2}{2\alpha}.
\]
Summing the above inequality for $k=1,\ldots,T$, the divergence terms cancel each other, yielding
\begin{equation}
\label{eq:thm-J1}
\sum_{k=1}^T \oeta_{k-1}  \ip{\hg_k, \t^{(k-1)}-\t^*} \le \frac{1}{\sum_{k=1}^{T} \eta_{k-1}}\left(\div(\t^*,\t^{(0)}) - \div(\t^*,\t^{(T)}) + \frac{1}{2\alpha} \sum_{k=1}^T \eta_{k-1}^2 \|\hg_k\|_*^2 \right)~.
\end{equation}

Let us now turn to the second sum. 
We start with developing a bound on the expected regret.
For any $1\le k \le T$, by construction  $\oeta_{k-1}$ and $\t^{(k-1)}$ are $\F_{k-1}$-measurable. This, together with~\eqref{eq:gradest} gives
\begin{equation}
\label{eq:zmartingale}
\EEpcond{\oeta_{k-1}\ip{\gradk -\hg_k,\t^*-\t^{(k-1)} }}{\F_{k-1}} = 
\oeta_{k-1}\ip{\gradk- \EEpcond{\hg_k}{\F_{k-1}} , \t^*-\t^{(k-1)} } = 0~.
\end{equation}
Combining this result with \eqref{eq:thm-J} and \eqref{eq:thm-J1} yields
\begin{eqnarray}
\EEp{J\left(\bar{\t}^{(T)}\right) - J(\t^*)} 
&\le& 
\frac{1}{\sum_{k=1}^{T} \eta_{k-1}}\left(\div(\t^*,\t^{(0)}) - \div(\t^*,\t^{(T)}) + \frac{1}{2\alpha} \sum_{k=1}^T \eta_{k-1}^2 \EEp{\EEpcond{\|\hg_k\|_*^2}{\F_{k-1}}} \right) \nonumber \\
&\le&\frac{ \delta + \frac{1}{2\alpha} \sum_{k=1}^T \eta_{k-1}^2 B}{\sum_{k=1}^{T} \eta_{k-1}}~,
\label{eq:thm-J2}
\end{eqnarray}
where we used the tower rule to bring in the bound \eqref{eq:gbound}, the nonnegativity of Bregman divergences, and
$D_\Psi(\t,\t^{(0)}) \le \Psi(\t)-\Psi(\t^{(0)})$; the latter holds as $\ip{ \nabla \Psi(\t^{(0)}), \t-\t^{(0)} } \ge 0$ since $\t^{(0)}$ minimizes $\Psi$ on $K$.
 \todoc{Note that $\EEp{\|\hg_k\|_*^2}$ could be smaller than $\sup_{k} \EEpcond{\|\hg_k\|_*^2}{\F_{k-1} }$. Thus, the result on the expected rate of convergence could be strengthened. Maybe add a remark after the theorem!}
Substituting $\eta_{k-1}=\eta=\sqrt{\frac{2\alpha\delta}{B T}}, k=1,\ldots,T$ finishes the proof of \eqref{eq:thm-expectation}.
\todoc{We should note somewhere that a time-varying learning rate can also be used easily.}

To prove the high probability result~\eqref{eq:thm-highprob}, 
 notice that thanks to~\eqref{eq:gradest}  
  $\left\{ \eta_{k-1} \ip{\gradk -\hg_k,\t^*-\t^{(k-1)}} \right\}$ is an $\{\F_{k}\}$-adapted martingale-difference sequence (cf.~\eqref{eq:zmartingale}).
By the strong convexity of $\Psi$ we have 
\[
\frac{\alpha}{2} \|\t^{(k-1)}-\t^*\|^2 \le \Psi(\t^{(k-1)})-\Psi(\t^*) \le \delta.
\]
Furthermore, conditions \eqref{eq:gradest} and \eqref{eq:hgkboundedvar} imply that
$\left\| \gradk\right\|_*^2 \le B'$ a.s., and so by  \eqref{eq:hgkboundedvar} we have
 $\left\|\gradk -\hg_k\right\|_* \le  2\sqrt{B'}$ a.s. Then by H\"older's inequality
\[
\left| \ip{\gradk -\hg_k,\t^*-\t^{(k-1)}} \right|
\le \left\|\gradk -\hg_k\right\|_* \|\t^*-\t^{(k-1)}\| \le 2\sqrt{\frac{2B'\delta}{\alpha}}.
\]
Thus, by the Hoeffding-Azuma inequality \citep[see, e.g., Lemma A.7,][]{CBLu06:book}, for any $0<\epsilon<1$ we have, with probability at least $1-\epsilon$,
\begin{equation}
\label{eq:z-hpbound}
\sum_{k=1}^T \oeta_{k-1} \ip{\gradk -\hg_k ,\t^*-\t^{(k-1)}} \le 
\frac{4}{\sum_{k=1}^T \eta_{k-1}}\sqrt{\frac{B'\delta}{\alpha}\left(\sum_{k=1}^T \eta_{k-1}^2\right)\ln\frac{1}{\epsilon}}~.
\end{equation}
Combining \eqref{eq:thm-J1} with \eqref{eq:hgkboundedvar} implies an almost sure upper bound on the first sum
on the right hand side of \eqref{eq:thm-J} as in \eqref{eq:thm-J2} with $B'$ in place of $B$.
\todoc{Missing some terms $\sum_{k=1}^T\eta_{k-1}^2\approx \sum_{k=1}^T 1/k \approx \ln(T)$?? Note that for $\eta_k=1/k$, we are in trouble: $\sum_{k} 1/k^2$ is finite, but $\sum_{k=1}^T (1/k) \approx \ln (T)$, so a term like $1/\ln(T)$ shows up!? What am I missing? The whole proof seems to be flawn for the strongly convex case because of this. A different argument will likely be needed.}
 This, together with 
\eqref{eq:z-hpbound} proves the required high probability bound \eqref{eq:thm-highprob} when substituting
$\eta_{k-1}=\eta'=\sqrt{\frac{2\alpha\delta}{B'T}}$.

\if0
Applying the Hoeffding-Azuma bound to the martingale-difference sequence $\{\|\hg_k-\|_*^2\EEpcond{\| \hg_k \|_*^2}{\F_{k-1}} \}$ which has range at most $B'$, 
we obtain, by the union bound, that with probability at least $1-\epsilon$, for $\eta_{k-1}=\eta$,
\[
J\left(\frac{1}{T}\sum_{k=1}^T \t^{(k-1)}\right) - \inf_{\t \in K\cap A} J(\t)
\le \sqrt{\frac{2B\delta}{\alpha T}} + 4\sqrt{\frac{B'\delta\log\frac{2}{\epsilon}}{\alpha T}} + \frac{B'}{2\alpha} \sqrt{\frac{T}{2}\log\frac{2}{\epsilon}}~.
\]
\fi
\end{proof}

\propwstar*
\begin{proof}
By introducing the variables $\tau = (\tau_t)_{1\le t\le n}\in \real^n$  and using the definition of $L$
we can write the optimization problem~\eqref{eq:jmin} as the constrained optimization problem
\begin{align}\label{eq:primal}
\begin{split}
\minimize_{w\in \W,\tau\in \real^n} \,\, & \frac1n \sum_{t=1}^n \ell_t(\tau_t) + \frac12 \sum_{i\in \I} \frac{\rho_i^2 \|w_i\|_2^2}{\theta_i}
\qquad \st  \tau_t = \sum_{i\in \I} \ip{ w_i, \phi_i(x_t) },
\end{split}
\end{align}
In what follows, we call this problem the primal problem.
The Lagrangian of this problem is
\[
\L(w,\tau,\alpha) \eqdef  \frac1n \sum_{t=1}^n \ell_t(\tau_t) +
 \frac12 \sum_{i\in \I} \frac{\rho_i^2 \|w_i\|_2^2}{\theta_i} + \sum_{t=1}^n \alpha_t \left\{\tau_t - \sum_{i\in \I} \ip{ w_i, \phi_i(x_t) }\right\}\,,
 \]
 where $\alpha = (\alpha_t)_{1\le t\le n} \in \real^n$ is the vector of Lagrange multipliers (or dual variables) associated with the $n$ equality constraints. The Lagrange dual function, $g(\alpha) \eqdef \inf_{w,\tau} \L(w,\tau,\alpha)$, can be readily seen to satisfy
 \[
g(\alpha) =   -\left( \frac{1}{2} \alpha\transpose \Kernel_\t \alpha + \frac{1}{n} \sum_{t=1}^n \ell_t^*(-n\alpha_t)\right)\,.
 \]\todoar{somebody please do the derivations here. I changed the above formulation a little to take $\frac{1}{n}$ factor into account}
Now, since the objective function of the primal problem is convex and the primal problem involves only affine equality constraints and the primal problem is clearly feasible, 
 by Slater's condition \citep[p.226,][]{boyd2004convex}, if $\alpha^*(\t)$ is the maximizer of $g(\alpha)$ then 
 \begin{align*}
 w^*(\t) &= \argmin_{w\in \W} \inf_{\tau\in \real^n} \L(w,\tau,\alpha^*(\t))\\
& =
\argmin_{w\in \W}  
 \sum_{i\in \I} \left\{ \frac{\rho_i^2 \|w_i\|_2^2}{2\theta_i} - \sum_{t=1}^n \alpha_t \ip{ w_i, \phi_i(x_t) } \right\}\,.
\end{align*} 
The minimum of the last expression is readily seen to be equal to the expression given in~\eqref{eq:optimal_w},
thus finishing the proof.
\end{proof}

\section{Calculating the derivative of $J(\t)$}
\label{sec:derivative}
In this section we show that under mild conditions the derivative of $J$ exist and we also give explicit forms. 
These derivations are quite standard and a similar argument can be found in the paper by (e.g.) \citet{rakotomamonjy2008simplemkl} specialized to the case when  $\ell_t$ is the hinge loss.

As it is well-known, thanks to the implicit function theorem \citep[e.g.,][Theorem~7.5.6]{BrPa70}, 
provided that $J=J(w,\t)$ is such that $\frac{\partial^2}{\partial \t \partial w} J(w,\t)$ and $\frac{\partial}{\partial w} J(w,\t)$ are continuous,
the gradient of $J(\t)$ can be computed by evaluating the partial derivative $\frac{\partial}{\partial \t} J(w,\t)$ of $J(w,\t)$ with respect to $\t$ at $(w^*(\t),\t))$, that is,
$\partial_\t J(\t) = \frac{\partial}{\partial \t} \left.J(w,\t)\right|_{w=w^*(\t)}$.
\todoc{We should rather do what A Rakotomamonjy et al. do in  their SimpleMKL paper. Bonnans and Shapiro, 1998 is the relevant result.}
Note that the derivative is well-defined only if $\theta>0$, that is, when no coordinates of $\theta$ is zero, in which case
\begin{equation}
\label{eq:diff}
\frac{\partial}{\partial \t} J(w^*(\t),\t) = -\left(\frac{\rho_i^2 \|w^*_i(\t)\|_2^2}{\t_i^2}\right)_{i\in\I}.
\end{equation}
If $\theta_i=0$ for some $i\in\I$, we define the derivative in a continuous manner as
\begin{equation}
\label{eq:diff0}
\frac{\partial}{\partial \t} J(\t) =  \lim_{\substack{\t'\to\t \\ \t'\in\D, \t'>0}} \frac{\partial}{\partial \t} J(\t')
\end{equation}
assuming that the limit exists. 
From \eqref{eq:optimal_w} we get, for any $i\in\I$,
$ 
\|w^*_i(\t)\|_2^2=\frac{\t_i^2}{\rho_i^4} \alpha^*(\t)\transpose \Kernel_i \alpha^*(\t)
$. 
Combining with \eqref{eq:diff} we obtain
\begin{equation*}
\frac{\partial}{\partial \t} J(w^*(\t),\t) = -\left(\frac{\alpha^*(\t)\transpose \Kernel_i \alpha^*(\t)}{ \rho_i^2}\right)_{i\in\I}\,.
\end{equation*}
Now, by \eqref{eq:diff0} and the implicit function theorem, $\alpha^*(\t)$ is a continuous function of $\t$ provided that the functions $\ell_t^*$ ($1\le t\le n$) are twice continuously differentiable. 
This shows that under the conditions listed so far, the limit in \eqref{eq:diff0} exists.
In the application we shall be concerned with, these conditions can be readily verified.



{
\addtolength{\parskip}{-0.2cm}
\bibliography{references}
\bibliographystyle{apalike}
}

\end{document}